\theoremstyle{plain}
\newtheorem{theorem}{Theorem}
\newtheorem{lemma}{Lemma}
\newtheorem*{prop*}{Proposition}
\theoremstyle{definition}
\newtheorem*{defn*}{Definition}
\newtheorem*{exmp*}{Example}
\newtheorem*{conj*}{Conjecture}
\theoremstyle{remark}
\newtheorem*{rmk*}{Remark}
\DeclareMathOperator*{\argmax}{arg\,max}
\def \ifempty#1{\def\temp{#1} \ifx\temp\empty }
\newcommand{\N}{\ensuremath{\mathbf{N}}}
\newcommand{\Q}{\ensuremath{\mathbf{Q}}}
\newcommand{\RR}{\ensuremath{\mathbb{R}}}
\newcommand{\q}{\ensuremath{\mathbf{q}}}
\newcommand{\y}{\ensuremath{\mathbf{y}}}
\renewcommand{\Re}{\mathbb{R}}
\def\A{{\cal A}}
\def\Y{{\cal Y}}
\def\D{{\cal D}}
\def\L{{\cal L}}
\def\Q{{\cal Q}}
\def\S{{\cal S}}
\def\O{{\cal O}}
\def\s{\mathbf{s}}
\def\y{\mathbf{y}}
\def\q{\mathbf{q}}
\def\r{\mathbf{r}}
\newcommand{\caja}[4][1]{{%
		\renewcommand{\arraystretch}{#1}%
		\begin{tabular}[#2]{@{}#3@{}}%
			#4%
		\end{tabular}%
}}
\title{Quasi-Newton Optimization Methods For\\Deep Learning Applications}
\author{Jacob Rafati \\ 
	Electrical Engineering and Computer Science, Univeristy of California, Merced\\
	\url{http://rafati.net}
	\and
	Roummel F. Marcia\\
	Department of Applied Mathematics, Univeristy of California, Merced\\
	\url{http://faculty.ucmerced.edu/rmarcia/}}
\date{}
\begin{document}
\maketitle

\section*{Abstract}
Deep learning algorithms often require solving a highly non-linear and nonconvex unconstrained optimization problem. Methods for solving optimization problems in large-scale machine learning, such as deep learning and deep reinforcement learning (RL), are generally restricted to the class of first-order algorithms, like stochastic gradient descent (SGD). While SGD iterates are inexpensive to compute, they have slow theoretical convergence rates. Furthermore, they require exhaustive trial-and-error to fine-tune many learning parameters. Using second-order curvature information to find search directions can help with more robust convergence for non-convex optimization problems. However, computing Hessian matrices for large-scale problems is not computationally practical. Alternatively, quasi-Newton methods construct an approximate of the Hessian matrix to build a quadratic model of the objective function. Quasi-Newton methods, like SGD, require only first-order gradient information, but they can result in superlinear convergence, which makes them attractive alternatives to SGD. The limited-memory Broyden-Fletcher-Goldfarb-Shanno (L-BFGS) approach is one of the most popular quasi-Newton methods that construct positive definite Hessian approximations. In this chapter, we propose efficient optimization methods based on L-BFGS quasi-Newton methods using line search and trust-region strategies. Our methods bridge the disparity between first- and second-order methods by using gradient information to calculate low-rank updates to Hessian approximations. We provide formal convergence analysis of these methods as well as empirical results on deep learning applications, such as image classification tasks and deep reinforcement learning on a set of ATARI 2600 video games. Our results show a robust convergence with preferred generalization characteristics as well as fast training time.

\section{Introduction}
\label{sec:intro}
Deep learning (DL) is becoming the leading technique for solving large-scale machine learning (ML) problems, including image classification, natural language processing, and large-scale regression tasks \citep{Goodfellow-et-al:2016:Deep-Learning-Book,Wani2019}. Deep learning algorithms attempt to train a function approximation (\emph{model}), usually a deep convolutional neural network (CNN), over a large dataset. In most of deep learning and deep reinforcement learning (RL) algorithms, solving an \emph{empirical risk minimization} (ERM) problem is required \citep{Hastie:2009:Elements-of-statistical-learning-book}. ERM is a highly nonlinear and nonconvex unconstrained optimization of the form 
\begin{align}
\min_{w \in \RR^n} \L(w) \triangleq \frac{1}{N} \sum_{i=1}^{N} \ell_i(w),
\label{eq:unconstrained-optimization}
\end{align}
where $w \in \RR^n$ is the vector of trainable parameters of the CNN model, $n$ is the number of the learning parameters, $N$ is the number of observations in a training dataset, and $\ell_i(w)\triangleq \ell(w;x_i,y_i)$ is the error of the current model's prediction for the $i$th observation of the \emph{training} dataset, $\D=\{(x_i,y_i)~|~i=1,\dots,N\}$. 

\subsection{Existing Methods}
Finding an efficient optimization algorithm for the large-scale, nonconvex ERM problem \eqref{eq:unconstrained-optimization} has attracted many researchers \citep{Goodfellow-et-al:2016:Deep-Learning-Book}. There are various algorithms proposed in the machine learning and optimization literatures to solve \eqref{eq:unconstrained-optimization}. Among those, one can name first-order methods such as stochastic gradient descent (SGD) methods \citep{Robbins:1951:SGD,bottou2010large,Duchi2011,recht2011hogwild}, the quasi-Newton methods \citep{adhikari2017limited,Le:2011:BFGS-and-CG,Erway-etal-2018-arXiv,Xu:2017:empirical}, and also Hessian-free methods \citep{Ma10,Ma11,Ma12,Bollapragada:2016:Hessian-Free}.

Since, in large-scale machine learning problems usually, $N$ and $n$ are very large numbers, the computation of the true gradient $\nabla \L(w)$ is expensive and the computation of the true Hessian $\nabla^2 \L(w)$ is not practical. Hence, most of the optimization algorithms in machine learning and deep learning literature are restricted to variants of first-order gradient descent methods, such as SGD methods. SGD methods use a small random sample of data, $J_k \in \S$, to compute an approximate of the gradient of the objective function, $\nabla \L^{(J_k)}(w) \approx \nabla \L(w)$.  At each iteration of the learning update, the parameters are updated as  $w_{k+1} \gets w_{k} - \eta_k \nabla \L^{(J_k)}(w_k)$, where $\eta_k$ is referred to as the \emph{learning rate}. 

The computational cost-per-iteration of SGD algorithms is small, making them the most widely used optimization method for the vast majority of deep learning applications. However, these methods require fine-tuning of many hyperparameters, including the learning rates. The learning rate is usually chosen to be very small; therefore, the SGD algorithms require revisiting many epochs of data during the learning process. Indeed, it is unlikely that the SGD methods perform successfully in the first attempt at a problem, though there is recent work that addresses tuning  hyperparameters automatically 
(see e.g., \citep{Zeiler2012,kingma2014adam}).

Another major drawback of SGD methods is that they struggle with
saddle-points that occur in most nonconvex optimization problems. These saddle-points have an undesirable effect on the model's generalization ability. On the other hand, using the second-order curvature information, can help produce more robust convergence. Newton's method, which is a second-order method, uses the Hessian, $\nabla^2 \L(w)$, and the gradient to find the search direction, $p_k = -\nabla^2 \L(w_k)^{-1} \nabla \L(w_k)$.  A line-search strategy is often used to find the step length along the search direction to guarantee convergence. The main bottleneck in second-order methods is the serious computational challenges involved in the computation of the Hessian, $\nabla^2 \L(w)$, for large-scale problems, in which it is not practical because $n$ is large. Quasi-Newton methods and Hessian-free methods both use approaches to approximate the Hessian matrix without computing and storing the true Hessian matrix. Specifically, Hessian-free methods attempt to find an approximate Newton direction by solving $\nabla^2 \L(w_k) p_k = -\nabla \L(w_k)$ without forming the Hessian using conjugate-gradient methods \citep{Ma10,Ma11,Ma12,Bollapragada:2016:Hessian-Free}.  

Quasi-Newton methods form an alternative class of first-order methods for solving the large-scale nonconvex optimization problem in deep learning. These methods, as in SGD, require only computing the first-order gradient of the objective function. By measuring and storing the difference between consecutive gradients, quasi-Newton methods construct \emph{quasi-Newton matrices} 
$\{B_k\}$ which are low-rank updates to the previous Hessian approximations for 
estimating $\nabla^2 \L(w_k)$ at each iteration. They build a quadratic model of the objective function by using these quasi-Newton matrices and use that model to find a sequence of search directions that can result in superlinear convergence. Since these methods do not require the second-order derivatives, they are more efficient than Newton's method for large-scale optimization problems \citep{Nocedal-Wright:2006:Numerical-Optimization-Book}. 

There are various quasi-Newton methods proposed in the literature. They differ in how they define and construct the quasi-Newton matrices $\{B_k\}$, how the search directions are computed, and how the parameters of the model are updated \citep{Nocedal-Wright:2006:Numerical-Optimization-Book,Brust-etal:2017:DenseInit,brust2017solving,Le:2011:BFGS-and-CG}.

\subsection{Motivation}
The Broyden-Fletcher-Goldfarb-Shanno (BFGS) method  \citep{Bro70,Fle70,Gol70,Sha70} is considered the most widely used quasi-Newton algorithm, which produces a positive-definite matrix $B_k$ for each iteration. The conventional BFGS minimization employs \emph{line-search}, which first attempts to find the search directions by computing $p_k = - B^{-1}_{k} \nabla \L(w_k) $ and then decides on the step size $\alpha_k \in (0,1]$ based on sufficient decrease and curvature conditions \citep{Nocedal-Wright:2006:Numerical-Optimization-Book} for each iteration $k$ and then update the parameters $w_{k+1} = w_k + \alpha_k p_k$. The line-search algorithm first tries the unit step length $\alpha_k = 1$ and if it does not satisfy the sufficient decrease and the curvature conditions, it recursively reduces $\alpha_k$ until some stopping criteria (for example $\alpha_k < 0.1$). 

Solving $B_{k} p_k = -\nabla \L(w_k) $ can become computationally expensive when $B_k$ becomes a high-rank update. The \emph{limited-memory} BFGS (L-BFGS) method  constructs a sequence of  low-rank updates to the Hessian  approximation;  consequently solving $p_k = B_k^{-1}\nabla \L(w_k) $ can be done efficiently. As an alternative to gradient descent, limited-memory quasi-Newton algorithms with line search have been implemented in a deep learning setting \citep{le2011optimization}. These methods approximate second derivative information, improving the quality of each training iteration and circumventing the need for application specific parameter tuning.

There are computational costs associated with the satisfaction of the sufficient decrease and curvature conditions, as well as finding $\alpha_k$ using line-search methods. Also if the curvature condition is not satisfied for $\alpha_k \in (0,1]$, the L-BFGS matrix may not stay positive definite, and the update will become unstable. On the other hand, if the search direction is rejected in order to preserve the positive definiteness of L-BFGS matrices, the progress of learning might stop or become very slow.             

Trust-region methods attempt to find the search direction, $p_k$, in a region within which they trust the accuracy of the quadratic model of the objective function, $\Q(p_k)\triangleq \frac{1}{2} p_k^T B_k p_k + \nabla \L(w_k)^T p_k$. These methods not only have the benefit of being independent from the fine-tuning of hyperaparameters, but they may improve upon the training performance and the convergence robustness of the line-search methods. Furthermore, trust-region L-BFGS methods 
can easily reject the search directions if the curvature condition is not satisfied in order to preserve the positive definiteness of the L-BFGS matrices \citep{Rafati-et-al:2018:EUSIPCO}. The computational bottleneck of trust-region methods is the solution of the trust-region subproblem. However, recent work has shown that the trust-region subproblem can be efficiently solved if the Hessian approximation, $B_k$, is chosen to be a quasi-Newton matrix \citep{brust2017solving,BurdakovLMTR16}.

\subsection{Applications and Objectives}
Deep learning algorithms attempt to solve large-scale machine learning problems by learning a model (or a parameterized function approximator) from the observed data in order to predict the unseen events. The model used in deep learning is an artificial neural network which is a stack of many convolutional layers, fully connected layers, nonlinear activation functions, etc. Many data-driven or goal-driven applications can be approached by deep learning methods. Depending on the application and the data, one should choose a proper architecture for the model and define an empirical loss function. (For the state-of-the-art deep neural network architectures, and deep learning applications for supervised learning and unsupervised learning,  see \citep{Wani2019}.) The common block in all deep learning algorithms is the optimization step for solving the ERM problem defined in \eqref{eq:unconstrained-optimization}.  

In this chapter, we present methods based on quasi-Newton optimization for solving the ERM problem for deep learning applications. For numerical experiments, we focus on two deep learning applications, one in supervised learning and the other one in reinforcement learning. The proposed methods are general purpose and can be employed for solving optimization steps of other deep learning applications. 

First, we introduce novel large-scale L-BFGS optimization methods using the trust-region strategy -- as an alternative to the gradient descent methods. This method is called Trust-Region Minimization Algorithm for Training Responses (TRMinATR) \citep{Rafati-et-al:2018:EUSIPCO}. We implement practical computational algorithms to solve the Empirical Risk Minimization (ERM) problems that arise in machine learning and deep learning applications. We provide empirical results on the classification task of the MNIST dataset and show robust convergence with preferred generalization characteristics. Based on the empirical results, we provide a comparison between the trust-region strategy with the line-search strategy on their different convergence properties. TRMinATR solves the associated trust-region subproblem, which can be computationally intensive in large scale problems, by efficiently computing a closed form solution at each iteration. Based on the distinguishing characteristics of trust-region algorithms, unlike line-search methods, the progress of the learning will not stop or slow down due to the occasional rejection of the undesired search directions. We also study techniques for initialization of the positive definite L-BFGS quasi-Newton matrices in the trust-region strategy so that they do not introduce any false curvature conditions when constructing the quadratic model of the objective function. 

Next, we investigate the utility of quasi-Newton optimization methods in deep reinforcement learning (RL) applications. RL -- a class of machine learning problems -- is learning how to map situations to actions so as to maximize numerical reward signals received during the experiences that an artificial agent has as it interacts with its environment \citep{RL-Book:Sutton:Barto:2017}. An RL agent must be able to sense the state of its environment and must be able to take actions that affect the state. The agent may also be seen as having a goal (or goals) related to the state of the environment. One of the challenges that arise in real-world reinforcement learning (RL) problems is the ``curse of dimensionality''. Nonlinear function approximators coupled with RL have made it possible to learn abstractions over high dimensional state spaces \citep{SuttonRS:1996:Coarse,Rafati-Noelle:2015:CSC,Rafati-Noelle:2017:CCCN,Rafati2019phd,Rafati-Noelle:2018:HRL-arXiv,Melo:2008:Analysis-Q-learnings}.
Successful examples of using neural networks for RL include learning how to play the game of
Backgammon at the Grand Master
level~\citep{TesauroG:1995:TDGammon}. More recently, researchers at DeepMind Technologies used a deep Q-learning algorithm to play various ATARI games from the raw screen image stream \citep{DeepMind:Atari:2013,DeepMind:Nature:2015}. The Deep Q-learning algorithm \citep{DeepMind:Atari:2013} employed a convolutional neural network (CNN) as the state-action value function approximation. The resulting performance on these games was frequently at or better than the human level. In another effort, DeepMind used deep CNNs and a Monte Carlo Tree Search algorithm that combines supervised learning and RL to learn how to play the game of Go at a super-human level \citep{DeepMind-AlphaGo}. 

We implement an L-BFGS optimization method  for deep reinforcement learning framework. Our deep L-BFGS Q-learning method is designed to be efficient for parallel computation using GPUs. We investigate our algorithm using a subset of the ATARI 2600 games, assessing its ability to learn robust representations of the state-action value function, as well as its computation and memory efficiency. We also analyze the convergence properties of Q-learning using a deep neural network employing L-BFGS optimization.

\subsection{Chapter Outline}
In Section \ref{sec:prelim}, we introduce a brief background on machine learning, deep learning, and optimality conditions for unconstrained optimization. In Section \ref{sec:optim}, we introduce two common optimization strategies for unconstrained optimization, i.e. line-search and trust-region. In Section \ref{sec:qn}, we introduce quasi-Newton methods based on L-BFGS optimization in both line-search and trust-region strategies. In Section \ref{sec:image-rec}, we implement algorithms based on trust-region and line-search L-BFGS for image recognition task. In Section \ref{sec:rl} we introduce the RL problem and methods based on L-BFGS line-search optimization for solving ERM problem in deep RL applications. 

\section{Unconstrained Optimization Problem}
\label{sec:prelim}

In the unconstrained optimization problem, we want to solve the minimization problem \eqref{eq:unconstrained-optimization}.
\begin{align}
\min_{w} \L(w),
\end{align}
where $\L:~\RR^n \rightarrow \RR$ is a smooth function. A point $w^*$ is a \emph{global} minimizer if $\L(w^*) \leq \L(w)$ for all $w \in \RR^n$. Usually $\L$ is a nonconvex function and most algorithms are only able to find the \emph{local} minimizer. A point $w^*$ is a local minimizer if there is a neighborhood $\N$ of $w^*$ such that $\L(w^*) \leq \L(w)$ for all $w \in \N$. For convex functions, every local minimizer is also a global minimizer, but this statement is not valid for nonconvex functions. If $\L$ is twice continuously differentiable we may be able to tell that $w^*$ is a local minimizer by examining the gradient $\nabla \L(w^*)$ and the Hessian $\nabla^2 \L(w^*)$. Let's assume that the objective function, $\L$, is smooth: the first derivative (gradient) is differentiable and the second derivative (Hessian) is continuous. To study the minimizers of a smooth function, Taylor's theorem is essential. 
\begin{theorem}[Taylor's Theorem]
	Suppose that $\L:~\RR^n \rightarrow \RR$ is continuously differentiable. Consider $p \in \RR^n$ such that $\L(w+p)$ is well defined, then we have
	\begin{align}
	\L(w+p) = \L(w) + \nabla \L(w+tp)^T p, \quad \textrm{for some } t \in (0,1).
	\end{align}
	Also if $\L$ is twice continuously differentiable,
	\begin{align}
	\L(w+p) = \L(w) + \nabla \L(w+tp)^T p + \frac{1}{2}p^T \nabla^2 \L(w+tp)p,\textrm{ for some } t \in (0,1).
	\label{eq:taylor}
	\end{align} 
\end{theorem}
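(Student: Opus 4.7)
The plan is to reduce the multivariable statement to the single-variable version of Taylor's theorem by slicing along the line segment from $w$ to $w+p$. Concretely, I would define $\phi:[0,1]\to\RR$ by $\phi(t) \triangleq \L(w+tp)$, and verify that the hypothesis $\L(w+p)$ is well defined means $w+tp$ lies in the domain for every $t\in[0,1]$, so that $\phi$ is well defined. Under the first hypothesis (continuous differentiability of $\L$), the chain rule yields $\phi'(t) = \nabla \L(w+tp)^T p$, and under the second hypothesis (twice continuous differentiability), a second application of the chain rule gives $\phi''(t) = p^T \nabla^2 \L(w+tp)\, p$; both derivatives are continuous in $t$ by composition.

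For the first identity, I would apply the single-variable mean value theorem to $\phi$ on $[0,1]$: there exists $t\in(0,1)$ with $\phi(1) - \phi(0) = \phi'(t)$. Substituting the expressions for $\phi$ and $\phi'$ yields
\begin{align}
\L(w+p) - \L(w) = \nabla \L(w+tp)^T p,
\end{align}
which is the first claim. For the second identity, I would invoke the single-variable Taylor theorem with Lagrange remainder applied to $\phi$ at $0$, expanded to first order with a second-order remainder: there exists $t\in(0,1)$ such that
\begin{align}
\phi(1) = \phi(0) + \phi'(0) + \tfrac{1}{2}\phi''(t).
\end{align}
Substituting the chain-rule expressions gives the stated formula (with $\nabla\L(w)^T p$ as the first-order term and $\tfrac{1}{2}p^T\nabla^2\L(w+tp)p$ as the remainder).

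The only real point of care, and hence the main obstacle, is the rigorous justification of the chain rule identities $\phi'(t) = \nabla\L(w+tp)^T p$ and $\phi''(t) = p^T\nabla^2\L(w+tp)\,p$ along with their continuity, which is needed to apply the single-variable theorems. This is a standard consequence of the continuous differentiability assumptions on $\L$ together with the smoothness of the affine map $t\mapsto w+tp$, so no hard estimates are required; beyond that, the argument is a direct reduction to the one-dimensional case and involves no further calculation.
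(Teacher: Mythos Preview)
The paper does not prove this theorem; it is stated as background (essentially Theorem~2.1 in Nocedal--Wright) and used without proof. Your reduction to the one-variable mean value theorem and one-variable Taylor with Lagrange remainder via $\phi(t)=\L(w+tp)$ is exactly the standard argument, and it is correct.

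One point worth flagging: the formula you actually derive for the second part has first-order term $\nabla\L(w)^T p$ (i.e., $\phi'(0)$), whereas the statement as printed in the paper has $\nabla\L(w+tp)^T p$ with the \emph{same} $t$ that appears in the Hessian term. That printed version is not what your argument yields, and in fact it fails in simple cases: for $\L(w)=w^2$ on $\RR$ with $w=0$, $p=1$, the right-hand side is $2t+1$, which equals the left-hand side $1$ only at $t=0\notin(0,1)$. So the discrepancy is a typo in the paper, and the formula you prove (gradient evaluated at $w$, Hessian at $w+tp$) is the correct one. You may want to say this explicitly rather than silently writing ``the stated formula'' while substituting the corrected term.
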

A point $w^*$ is a local minimizer of $\L$ only if $\nabla \L(w^*) = 0$.  This is knowns as the \emph{first-order optimality condition.}  
In addition, if $\nabla^2 \L(w^*)$ is positive definite, then
$w^*$ is guaranteed to be a local minimizer. This is known as \emph{second-order sufficient condition} \cite{Nocedal-Wright:2006:Numerical-Optimization-Book}.  

%

\section{Optimization Strategies}
\label{sec:optim}
In this section, we briefly introduce two optimization strategies that are commonly used, namely \emph{line search} and \emph{trust-region} methods \citep{Nocedal-Wright:2006:Numerical-Optimization-Book}. Both methods seek to minimize the objective function $\L(w)$ in \eqref{eq:unconstrained-optimization} by defining a sequence of iterates $\{w_k\}$ which are governed by the search direction $p_k$. Each respective method is defined by its approach to computing the search direction $p_k$ so as to minimize the quadratic model of the objective function defined by
\begin{eqnarray}\label{eq:quadratic-model}
\Q_k(p) \triangleq g_k^T p + \frac{1}{2} p^T B_k p, 
\end{eqnarray}
where $g_k\triangleq\nabla\L(w_k)$ and $B_k$ is an approximation to the Hessian matrix $\nabla^2\L(w_k)$. Note that $\Q_k(p)$ is a quadratic approximation of $\L(w_k + p) - \L(w_k)$ based on the Taylor's expansion in \eqref{eq:taylor}.
\subsection{Line Search Methods} 
Each iteration of a line search method computes a search direction $p_k$ by minimizing a quadratic model of the objective function,
\begin{align}
p_k = \arg \min_{p \in \mathbb{R}^n} \Q_k(p) \triangleq \frac{1}{2}p^T B_k p + g_k^T p,
\label{eqn:line-search-subproblem}
\end{align}
and then decides how far to move along that direction. The iteration is given by $w_{k+1} = w_k + \alpha_k p_k$, where $\alpha_k$ is called the step size. If $B_k$ is a positive definite matrix, the minimizer of the quadratic function can be found as $p_k = - B_k^{-1} g_k$. 
The ideal choice for step size $\alpha_k>0$ is the global minimizer of the univariate function $\phi(\alpha)=\L(w_k + \alpha p_k)$, but in practice $\alpha_k$ is chosen to satisfy sufficient decrease and curvature conditions, e.g., 
the Wolfe conditions \citep{Wolfe1969,Nocedal-Wright:2006:Numerical-Optimization-Book} given by
\begin{subequations}
	\begin{align}
	&\L(w_k + \alpha_k p_k)  \leq \L(w_k) + c_1 \alpha_k \nabla \L(w_k)^T p_k, \\
	&\nabla \L(w_k + \alpha_k p_k)^T p_k \geq c_2 \nabla \L(w_k)^T p_k,
	\end{align}
	\label{eqn:Wolfe-Conditions}
\end{subequations}
with $0 < c_1 < c_2 < 1$.

The general pseudo-code for the line search method is given in Algorithm \ref{Algo:line-search-pseudo-code} (see \cite{Nocedal-Wright:2006:Numerical-Optimization-Book} for details).

\begin{algorithm}[t!]
	\centering
	\begin{algorithmic}
		\State {\bf Input: } $w_0$, tolerance $\epsilon > 0$ 
		\State $k \gets 0$
		\Repeat
		\State Compute $g_k = \nabla \L(w_k)$
		\State Calculate $B_k$ 
		\State Compute search direction $p_k$ by solving \eqref{eqn:line-search-subproblem}
		\State find $\alpha_k$ that satisfies Wolfe Conditions in \eqref{eqn:Wolfe-Conditions}
		\State $k \gets k+1$
		\Until{$\| g_k \| < \epsilon$ or $k$ reached to max number of iterations}				
	\end{algorithmic}
	\caption{Line Search Method pseudo-code.}
	\label{Algo:line-search-pseudo-code}
\end{algorithm}

\subsection{Trust-Region Methods}
Trust-region methods generate a sequence of iterates $w_{k+1} = w_k + p_k$,
where each search step, $p_k$, is obtained by
solving the following trust-region subproblem:
\begin{align}
p_k = \underset{p \in \mathbb{R}^n}{\text{argmin}}~\Q_k(p)\triangleq \frac{1}{2}p^T B_k p + g_k^T p, \text{ such that \ } \| p \|_2  \leq \delta_k,
\label{eq:TR-subproblem}
\end{align} 
where $\delta_k>0$ is the trust-region radius. The global solution to the trust-region subproblem \eqref{eq:TR-subproblem}
can be characterized by the optimality conditions given in the 
following theorem due to \cite{gay1981computing} and \cite{more1983computing}.

\begin{theorem} 
	Let $\delta_k$ be a positive constant. A vector $p^*$ is a global solution of the trust-region subproblem \eqref{eq:TR-subproblem} if and only if $\|p^*\|_2\leq \delta_k$ and there exists a unique $\sigma^*\geq 0$ such that $B+\sigma^*I$ is positive semidefinite and 
	\begin{align}\label{eq:optcond}
	(B+\sigma^*I) p^*=-g\quad \text{and}\quad \sigma^*(\delta-\|p^*\|_2)=0.
	\end{align}
	Moreover, if $B+\sigma^*I$ is positive definite, then the global minimizer is unique. 
\end{theorem}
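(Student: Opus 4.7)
The plan is to prove the equivalence in two directions and then handle the uniqueness statement, with a single algebraic identity doing most of the heavy lifting. Specifically, whenever $(B+\sigma I)p^* = -g$, a direct expansion using symmetry of $B$ yields, for every $p \in \RR^n$,
\begin{equation*}
\Q_k(p) - \Q_k(p^*) = \tfrac{1}{2}(p - p^*)^T(B + \sigma I)(p - p^*) + \tfrac{\sigma}{2}\bigl(\|p^*\|_2^2 - \|p\|_2^2\bigr).
\end{equation*}
This identity reduces the comparison of $\Q_k$-values to a question about signs, and I will use it in both directions.

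For the sufficiency direction, I assume all the stated conditions and pick an arbitrary feasible $p$ with $\|p\|_2 \leq \delta_k$. The first term on the right is nonnegative because $B + \sigma^* I$ is positive semidefinite; for the second term, complementary slackness forces either $\sigma^* = 0$ (killing the term) or $\|p^*\|_2 = \delta_k \geq \|p\|_2$ (making the term nonnegative). Thus $\Q_k(p) \geq \Q_k(p^*)$, so $p^*$ is a global minimizer.

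For the necessity direction, I would invoke the KKT conditions for the constraint $\|p\|_2^2 \leq \delta_k^2$. Slater's condition is trivial (any $p$ with $\|p\|_2 < \delta_k$ is strictly feasible), so a multiplier $\sigma^* \geq 0$ exists with $(B+\sigma^* I)p^* = -g$ and $\sigma^*(\delta_k - \|p^*\|_2) = 0$. The main obstacle is showing $B + \sigma^* I \succeq 0$ on all of $\RR^n$, not merely on the tangent space at $p^*$; the classical second-order necessary condition yields only the weaker statement. I would split into cases: if $\sigma^* = 0$, then $\|p^*\|_2 < \delta_k$ is not forced but $p^*$ must be a free minimizer of $\Q_k$ (small perturbations are feasible) and standard theory gives $B \succeq 0$; if $\sigma^* > 0$, then $\|p^*\|_2 = \delta_k$, and for an arbitrary $v \in \RR^n$ I would construct a feasible curve $p(t)$ through $p^*$ with tangent $v$ at $t = 0$ (for example, $p(t) = (\delta_k/\|p^* + tv\|_2)(p^* + tv)$ keeps the iterate on the sphere), substitute into the identity, apply global optimality $\Q_k(p(t)) \geq \Q_k(p^*)$, and pass to the limit $t \to 0$ to extract $v^T(B + \sigma^* I)v \geq 0$.

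For the uniqueness clause, positive definiteness of $B + \sigma^* I$ makes the matrix invertible, so stationarity determines $p^* = -(B+\sigma^* I)^{-1}g$ uniquely once $\sigma^*$ is fixed; uniqueness of $\sigma^*$ itself follows from strict monotonicity of $\sigma \mapsto \|{-}(B+\sigma I)^{-1}g\|_2$ on the interval where $B + \sigma I \succ 0$ (which a short differentiation argument establishes), combined with complementary slackness pinning down the value. The hardest technical step of the whole proof is the positive semidefiniteness in the boundary case of the necessity direction, since it genuinely requires a global-optimality argument along a curved feasible path rather than a purely local second-order argument.
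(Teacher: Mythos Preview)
The paper does not actually prove this theorem; it merely states it and attributes it to Gay (1981) and Mor\'e--Sorensen (1983). So there is no ``paper's own proof'' to compare against, and your outline is essentially the classical argument found in those references and in textbooks such as Nocedal--Wright or Conn--Gould--Toint.

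That said, your necessity argument in the boundary case has a genuine gap. You propose, for an arbitrary $v\in\RR^n$, to build a feasible curve on the sphere through $p^*$ with tangent $v$, citing $p(t)=\delta_k(p^*+tv)/\|p^*+tv\|_2$ as the example. But a curve that stays on the sphere can only have a tangent lying in the tangent space of the sphere at $p^*$; a direct computation gives $\dot p(0)=v-(p^{*T}v/\delta_k^2)\,p^*$, the orthogonal projection of $v$ onto $\{u:u^Tp^*=0\}$. So your limit argument recovers $v^T(B+\sigma^*I)v\ge 0$ only for tangent $v$, not for all $v\in\RR^n$. The standard fix avoids curves altogether: for any $v$ with $v^Tp^*\neq 0$, the straight line $\{p^*+tv\}$ meets the sphere $\|p\|_2=\delta_k$ at a second point $p\neq p^*$; since $\|p\|_2=\|p^*\|_2$, your key identity collapses to $\Q_k(p)-\Q_k(p^*)=\tfrac12(p-p^*)^T(B+\sigma^*I)(p-p^*)\ge 0$, and $p-p^*$ is parallel to $v$. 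Tangent directions then follow by continuity. You should replace the curved-path step with this chord argument.

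Two smaller points. First, your case split ``$\sigma^*=0$ implies $p^*$ is a free minimizer'' is not quite clean, since $\sigma^*=0$ does not by itself force $\|p^*\|_2<\delta_k$; it is safer to split on whether $\|p^*\|_2<\delta_k$ (interior, unconstrained second-order condition gives $B\succeq 0$, set $\sigma^*=0$) versus $\|p^*\|_2=\delta_k$ (Lagrangian gives $\sigma^*\ge 0$, then the chord argument above). Second, the theorem asserts uniqueness of $\sigma^*$ in general, not only in the positive-definite clause. This is a one-line observation you omitted: if $\sigma_1,\sigma_2$ both satisfy the conditions, subtracting the stationarity equations gives $(\sigma_1-\sigma_2)p^*=0$; if $p^*\neq 0$ you are done, and if $p^*=0$ then $g=0$ and complementary slackness with $\delta_k>0$ forces $\sigma^*=0$.
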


\begin{figure}[hbt!]
	\centering
	\includegraphics[width=.6\textwidth]{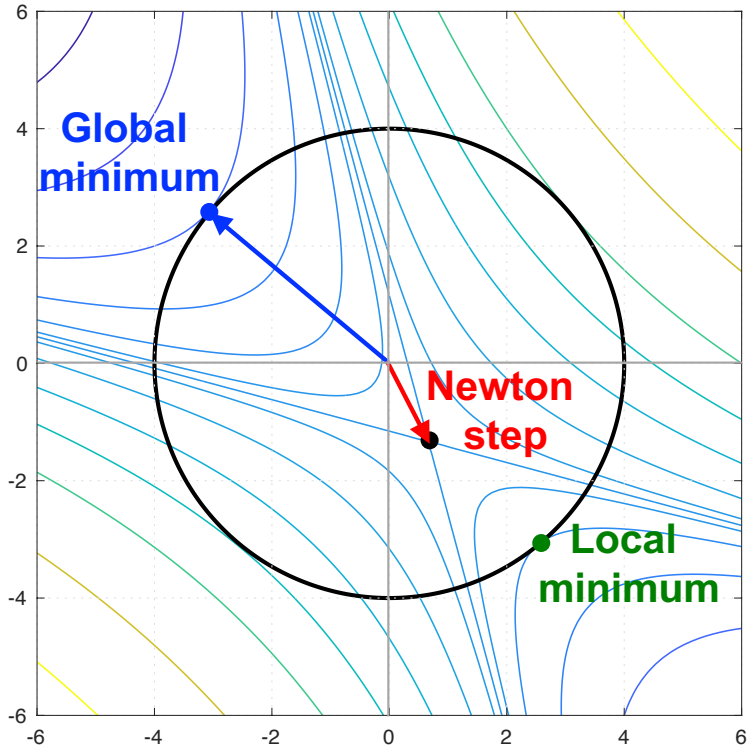}
	\caption{An illustration of trust-region methods.  For indefinite matrices,
		the Newton step (in red) leads to a saddle point.  The global minimizer
		(in blue) is characterized by the conditions in Eq.\ \eqref{eq:optcond}
		with $B + \sigma^*I$ positive semidefinite.  In contrast, local minimizers
		(in green) satisfy Eq.\ \eqref{eq:optcond} with $B+\sigma^*I$ not positive semidefinite.}
\end{figure}

The general pseudo-code for the trust region method is given in Algorithm \ref{Algo:trust-region-pseudo-code}. 
See Algorithm 6.2 of \cite{Nocedal-Wright:2006:Numerical-Optimization-Book} for details. For further details on trust-region methods, see \cite{ConGT00a}.

\begin{algorithm}[t!]
	\centering
	\begin{algorithmic}
		\State \textbf{Input: } $w_0$, $\epsilon >0$, $\hat{\delta} > 0$, $\delta_0 \in (0,\hat{\delta})$, $\eta \in [0,1/4)$
		\State $k \gets 0$
		\Repeat
		\State Compute $g_k = \nabla \L(w_k)$
		\State Construct quasi-Newton matrix $B_k$ 
		\State Compute search direction $p_k$ by solving \eqref{eq:TR-subproblem}
		\State $ared \gets \L(w_k) - \L(w_k + p_k)$
		\State $pred \gets -\Q_k(p_k)$
		\State $\rho_k \gets  ared/pred$
		\State Update trust-region radius $\delta_k$ 
		\If{$\rho_k > \eta$}
		\State $w_{k+1} = w_k + p_k$
		\Else
		\State $w_{k+1} = w_k$
		\EndIf			
		\State $k \gets k+1$
		\Until{$\| g_k \| < \epsilon$}				
	\end{algorithmic}
	\caption{Trust region method pseudo-code.}
	\label{Algo:trust-region-pseudo-code}
\end{algorithm}

\section{Quasi-Newton Optimization Methods}
\label{sec:qn}
Methods that use $B_k = \nabla^2 \L(w_k)$ for the Hessian 
in the quadratic model in \eqref{eq:quadratic-model}
typically exhibit quadratic rates of convergence.  However, in large-scale problems (where $n$ and $N$ are both large), computing the true Hessian explicitly is not practical. In this case,
quasi-Newton methods are viable alternatives because they exhibit super-linear convergence rates while maintaining memory and computational efficiency. Instead of the true Hessian, quasi-Newton methods use an approximation, $B_k$, which is updated after each step to take into account the additional knowledge gained during the step.

Quasi-Newton methods, like gradient descent methods, require only the computation of first-derivative information. They can construct a model of the objective function by measuring the changes in the consecutive gradients for estimating the Hessian. Most methods store the displacement, $\s_k \triangleq  w_{k+1} - w_k $, and the change of gradients, $\y_k \triangleq \nabla \L(w_{k+1}) - \nabla \L(w_{k})$, to construct the Hessian  approximations, $\{B_k\}$. The quasi-Newton matrices are required to satisfy the secant equation, $B_{k+1}  \s_k = \y_k$. Typically, there are additional conditions imposed on $B_{k+1}$, such as symmetry (since the exact Hessian is symmetric), and a requirement that the update to obtain $B_{k+1}$ from $B_k$ is low rank, meaning that the Hessian approximations cannot change too much from one iteration to the next.  Quasi-Newton methods vary in how this update is defined. The matrices are defined recursively with the initial matrix, $B_0$, taken to be $B_0 = \lambda_{k+1} I$, where the scalar $\lambda_{k+1} > 0$.
\subsection{The BFGS Update}
Perhaps the most well-known among all of the quasi-Newton methods is the
Broyden-Fletcher-Goldfarb-Shanno (BFGS) update \citep{liu1989limited,Nocedal-Wright:2006:Numerical-Optimization-Book},
given by 
\begin{align}
B_{k+1}= B_k - \frac{1}{\s_k^T B_k \s_k} B_k \s_k \s_k^T B_k + \frac{1}{\y_k^T \s_k} \y_k \y_k^T.
\label{eqn:bfgs}
\end{align}
The BFGS method generates positive-definite approximations whenever the initial approximation $B_0=\gamma_{k+1} I$ is positive definite and $\s_k^T \y_k > 0$.  A common value for $\gamma_{k+1}$ is $\y_k^T \y_k/\y_k^T \s_k$  \citep{Nocedal-Wright:2006:Numerical-Optimization-Book} (see \citep{Rafati-Marcia:2018:ICMLA} for alternative methods for choosing $\gamma_{k+1}$). 
Letting 
\begin{align}
S_k \triangleq [\s_0~~\dots~~\s_{k-1}] \text{ and } 
Y_k \triangleq[\y_{0}~~\dots~~\y_{k-1}],
\end{align}
the BFGS formula can be written in the following compact representation:
\begin{align}
B_{k} = B_0 + \Psi_k M_k \Psi_k^T,
\label{eq:compact}
\end{align}
where  $\Psi_k$ and $M_k$ are defined as
\begin{align}
\Psi_k = \Big[ B_0 S_k \ \ Y_k \Big],\quad
M_k = \begin{bmatrix} -S_k^T B_0 S_k& - L_k \\
-L_k^T & D_k 
\end{bmatrix}^{-1},
\label{eq:def:M-Psi}
\end{align}
and $L_k$ is the strictly lower triangular part and $D_k$ is the diagonal part of the matrix $S_k^T Y_k$, i.e., $S_k^T Y_k = L_k + D_k + U_k$, where $U_k$ is a strictly upper triangular matrix.    
(See \cite{ByrNS94} for further details.)


It is common in large-scale problems to store only the $m$ most-recently computed pairs $\{(\s_k, \y_k)\}$, where typically $m \le 100$. This approach is often referred to as \emph{limited-memory} BFGS (L-BFGS).

\subsection{Line-search L-BFGS Optimization}
In each iteration of the line-search (Algorithm \ref{Algo:line-search-pseudo-code}, we have to compute $p_k = - B_k^{-1} g_k$ at each iteration. We can make use of the following recursive formula for $H_k = B_k^{-1}$:
\begin{align}
H_{k+1} = \Big( I - \frac{\y_k \s_k^T}{\y_k^T \s_k} \Big)H_k \Big(I - \frac{\s_k \y_k^T}{\y_k \s_k^T} \Big) + \frac{\y_k \y_k^T}{\y_k \s_k^T},
\label{eq-L-BFGS-H} 
\end{align}
where $H_0 = \gamma_{k+1}^{-1} I = \nicefrac{\y_k^T \y_k}{\y_k^T \s_k} I$. The \emph{L-BFGS two-loop recursion algorithm}, given in Algorithm \ref{Algo:L-BFGS-two-loop-recursion}, can compute $p_k = -H_k g_k$ in $4mn$ operations \citep{Nocedal-Wright:2006:Numerical-Optimization-Book}. 

\begin{algorithm}
	\begin{algorithmic}
		\State $\q \gets g_k = \nabla \L(w_k)$
		\For{$i=k-1,\dots,k-m$}
		\State $\alpha_i = \frac{\s_i^T q}{\y_i^T \s_i}$
		\State $\q \gets \q - \alpha_i \y_i$
		\EndFor
		\State $\r \gets H_0 q$ 
		\For{$i=k-1,\dots,k-m$}
		\State $\beta = \frac{\y_i^T r}{\y_i^T \s_i}$
		\State $\r \gets \r + \s_i ( \alpha_i - \beta)$
		\EndFor\\
		\Return $- \r = - H_k g_k$
	\end{algorithmic}
	\caption{L-BFGS two-loop recursion.}
	\label{Algo:L-BFGS-two-loop-recursion}
\end{algorithm}

\subsection{Trust-Region Subproblem Solution}
To efficiently solve the trust-region subproblem \eqref{eq:TR-subproblem}, we exploit the compact representation of the BFGS matrix to obtain a global solution based on optimality conditions \eqref{eq:optcond}. In particular, we compute the spectral decomposition of $B_{k}$ using the compact representation of $B_{k}$. First, we obtain the $QR$ factorization of $\Psi_k=Q_kR_k$, where $Q_k$ has orthonormal columns and $R_k$ is strictly upper triangular.
Then we compute the eigendecomposition of $R_k M_k R_k^T = V_k \hat{\Lambda}_k V_k^T$, so that
\begin{align}
B_{k}=B_0+ \Psi_k M_k \Psi_k^T =\gamma_k I + Q_k V_k \hat{\Lambda}_k V_k^T Q_k^T.
\end{align}
Note that since $V_k$ is an orthogonal matrix, the matrix $Q_kV_k$ has orthonormal columns.
Let $P = [ \ Q_kV_k \ \ (Q_kV_k)^{\perp} \ ] \in \Re^{n \times n}$,
where $(Q_kV_k)^{\perp}$ is a matrix whose columns form an orthonormal
basis for the orthogonal complement of the range space of $Q_kV_k$, thereby making $P$ an orthonormal matrix.
Then 
\begin{align}
B_{k} = P 
\begin{bmatrix}
\hat{\Lambda} + \gamma_k I & 0 \\
0 & \gamma_k I
\end{bmatrix}
P^T.
\label{eq:orthonormal}
\end{align}	
Using this eigendecomposition to change variables
and diagonalize the first optimality condition in \eqref{eq:optcond},
a closed form expression for the solution $p_k^*$ can be derived.

The general solution for the trust-region subproblem using the Sherman-Morrison-Woodbury formula is given by
\begin{align}
p_k^* = - \frac{1}{\tau^*} 
\left [
I - \Psi_k( \tau^* M_k^{-1} + \Psi_k^T\Psi_k)^{-1}\Psi_k^T
\right ]
g_k,
\label{eq:SMW}
\end{align}  
where $\tau^* = \gamma_{k} + \sigma^{*}$, and $\sigma^*$ is the optimal Lagrange multiplier in \eqref{eq:optcond} (see \cite{brust2017solving} for details).

\section{Application to Image Recognition}
\label{sec:image-rec}
In this section, we compare the line search L-BFGS optimization method  with our proposed Trust-Region Minimization Algorithm for Training Responses (TRMinATR). The goal of the experiment is to perform the optimization necessary for neural network training. Both methods are implemented to train the LeNet-5 architecture with the purpose of image classification of the MNIST dataset.  All simulations were performed on an AWS EC2 p2.xlarge instance with 1 Tesla K80 GPU, 64 GiB memory, and 4 Intel 2.7 GHz Broadwell processors.  For the scalars $c_1$ and $c_2$ in the Wolfe line search condition, 
we used the typical values of $c_1 = 10^{-4}$ and $c_2 = 0.9$ \citep{Nocedal-Wright:2006:Numerical-Optimization-Book}.
All code is implemented in the Python language using TensorFlow and it is available at \texttt{https://rafati.net/lbfgs-tr}.

\subsection{LeNet-5 Convolutional Neural Network Architecture}
\label{sub:lenet}
We use the convolutional neural network architecture, LeNet-5 (Figure \ref{fig:lenet5}) for computing the likelihood $p_i(y_i|x_i;w_i)$. The LeNet-5 CNN is mainly used in the literature for character and digit recognition tasks \citep{LeCun-etal-1988:LeNet-5-original}. The details of layers' connectivity in LeNet-5 CNN architecture is given in Table \ref{t:LeNet}. The input to the network is $28 \times 28$ image and the output is $10$ neurons followed by a softmax function, attempting to approximate the posterior probability distribution $p(y_i | x_i;w)$. There are a total of $n = 431,080$ trainable parameters (weights) in LeNet-5 CCN.   
\begin{figure}[htb!]
	\centering
	\includegraphics[width=.9\textwidth]{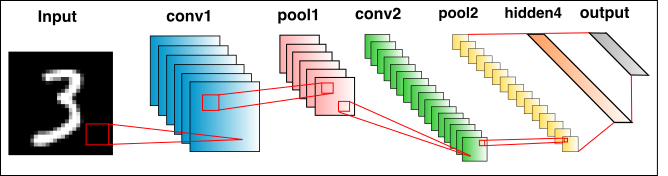}
	\caption{A LeNet deep learning network inspired by the architecture found in \cite{lecun2015lenet} . The neural network is used in the classification of the MNIST dataset of hand written digits. The convolutional neural network (CNN) uses convolutions followed by pooling layers for feature extraction. The final layer transforms the information into the required probability distribution.}\label{fig:net}
	\label{fig:lenet5}
\end{figure}

\begin{table}[htb!]
	\centering
	\caption{LeNet-5 CNN architecture \citep{LeCun-etal-1988:LeNet-5-original}.}
	\begin{tabular}[t]{@{}ll@{}} \hline 
		Layer &  Connections \\ \hline
		0: input & $28 \times 28$ image \\
		1 & \caja{t}{l}{convolutional, 20 $5 \times 5$ filters (stride $=1$), followed by ReLU} \\
		2 & \caja{t}{l}{max pooling, $2 \times 2$ window (stride $=2$)} \\
		3 & \caja{t}{l}{convolutional, 50 $5 \times 5$ filters (stride $=1$), followed by ReLU}\\
		4 & \caja{t}{l}{max pool, $2 \times 2$ window (stride $=2$) } \\
		5 & \caja{t}{l}{fully connected, 500 neurons (no dropout) \\ followed by ReLU} \\
		\caja{t}{l}{6: output} & \caja{t}{l}{fully connected, 10 neurons followed by softmax (no dropout)} \\ \hline
	\end{tabular}
	\label{t:LeNet}
\end{table}

\subsection{MNIST Image Classification Task}
\label{sub:mnist}
The convolutional neural network was trained and tested using the MNIST Dataset \citep{lecun1998mnist}. The dataset consists of 70,000 examples of handwritten digits with 60,000 examples used as a training set and 10,000 examples used as a test set. The digits range from 0 - 9 and their sizes have been normalized to 28x28 pixel images. The images include labels describing their intended classification. The MNIST dataset consists of $70,000$ examples of handwritten image of digits $0$ to $9$, with $N = 60,000$ image training set $\{(x_i,y_i)\}$, and $10,000$ used as the test set. Each image $x_i$ is a $28 \times 28$ pixel, and each pixel value is between $0$ and $255$. Each image $x_i$ in the training set include a label $y_i \in \{0,\dots,9\}$ describing its class. The objective function for the classification task in \eqref{eq:unconstrained-optimization} uses the cross entropy between model prediction and true labels given by
\begin{align}
\ell_i(w) = -\sum_{j=1}^{J} y_{ij} \log(p_i),
\end{align} 	
where the  $p_i(x_i;w) = p_i(y=y_i | x_i;w)$ is the probability distribution of the model, i.e., the likelihood that the image is correctly classified, $J$ is the number of classes ($J=10$ for MNIST digits dataset) and $y_{ij} = 1$ if $j=y_i$ and $y_{ij} = 0$ if $j \ne y_i$ (see \cite{Hastie:2009:Elements-of-statistical-learning-book} for details).

\subsection{Results}
The line search algorithm and TRMinATR perform comparably in terms of loss and accuracy. This remains consistent with different choices of the memory parameter $m$ (see Figure \ref{fig:lossacc}). The more interesting comparison is that of the training accuracy and the test accuracy. The two metrics follow each other closely. This is unlike the typical results using common gradient descent based optimization. Typically, the test accuracy is delayed in achieving the same results as the train accuracy. This would suggest that the model has a better chance of generalizing beyond the training data.

We compare the performance of L-BFGS method with the SGD one. The batch size of 64 was used for training. The loss and accuracy for training batch and the test data has reported for different learning rates, $10^{-6}\leq \alpha\leq 1.0$ (see Figure \ref{fig:lossacc} (c)--(f). Large learning rate (e.g. $\alpha=1$) led to poor performance (see Figure \ref{fig:lossacc} (c) and (d) ). The convergence was very slow for the small learning rates, i.e. $\alpha < 10^{-3}$. SGD only succeeds when the proper learning rate was used. This indicates that there is no trivial way for choosing the proper learning rate when using SGD methods and one can run so many experiments for fine-tuning the learning rates. Another interesting observation from Figure \ref{fig:lossacc} is that L-BFGS method (with either line-search or trust-region) has smaller \textit{generalization gap} in comparison to the SGD method. (Generalization gap is the difference between the expected loss and the empirical loss, or roughly the difference between test loss and train loss.)   

\begin{figure*}[htb!]
	\centering
	\begin{tabular}{cc} 
		\includegraphics[width=.44\textwidth]{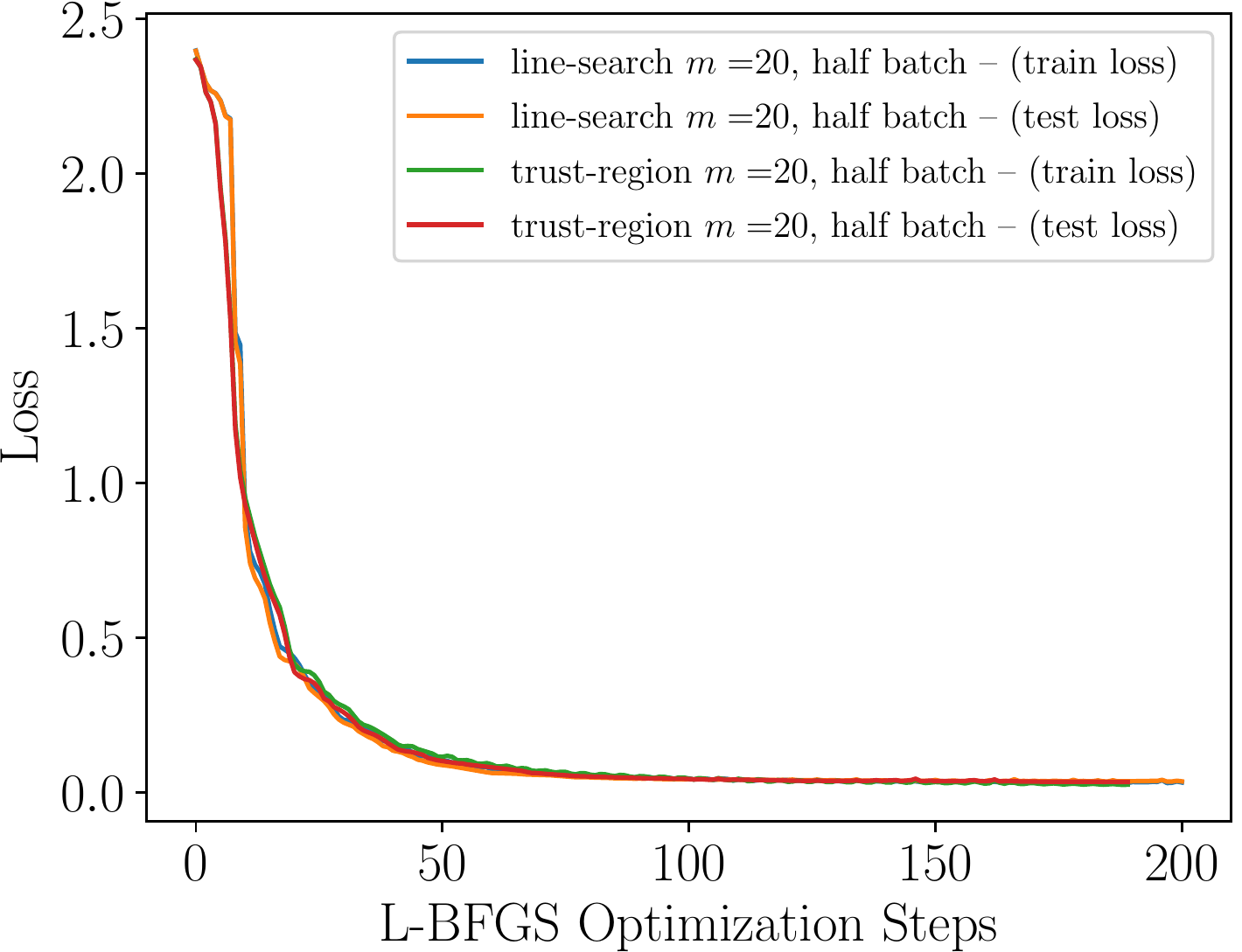} & 
		\includegraphics[width=.44\textwidth]{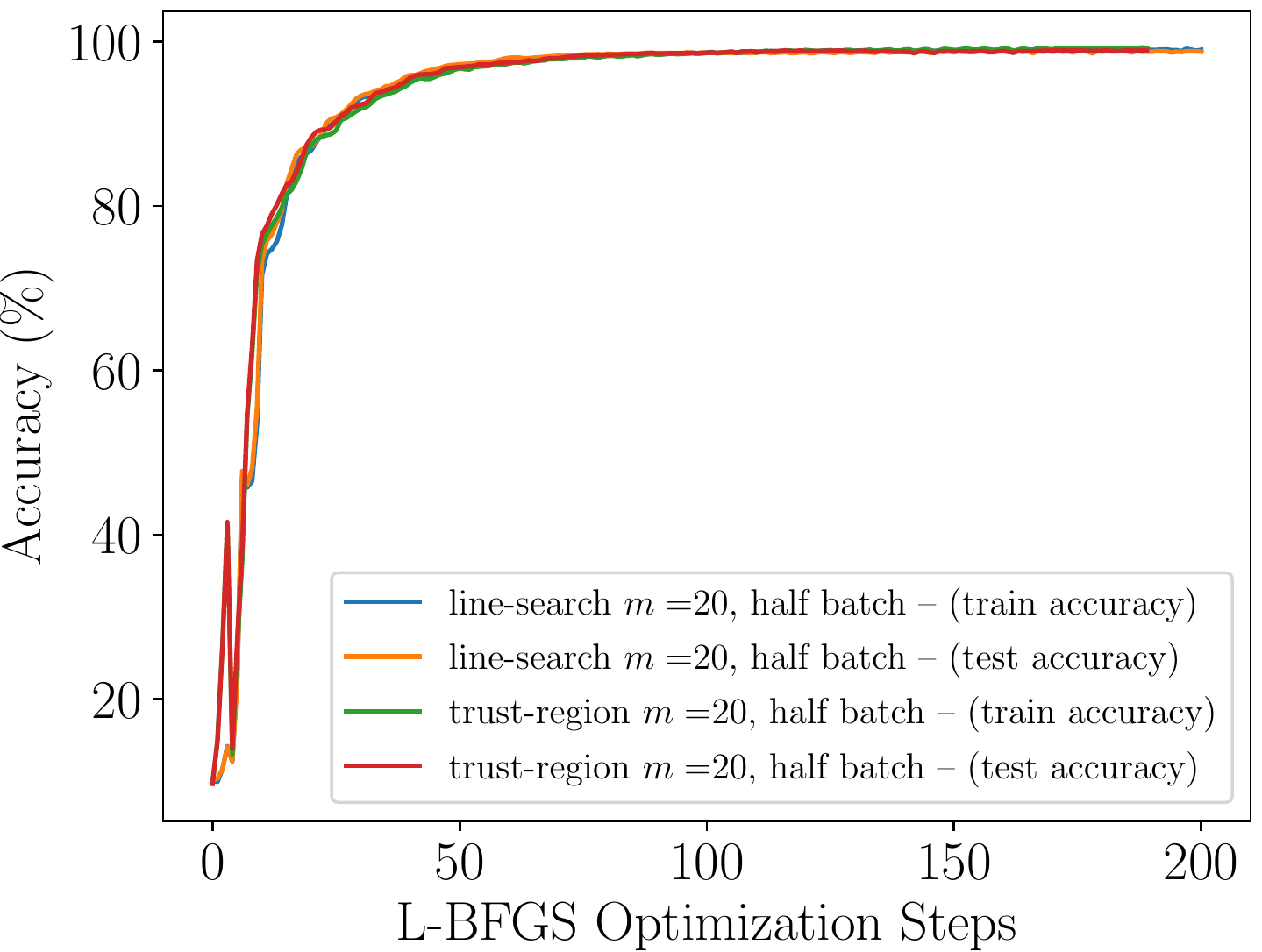}\\
		(a) & (b)\\
		\includegraphics[width=.44\textwidth]{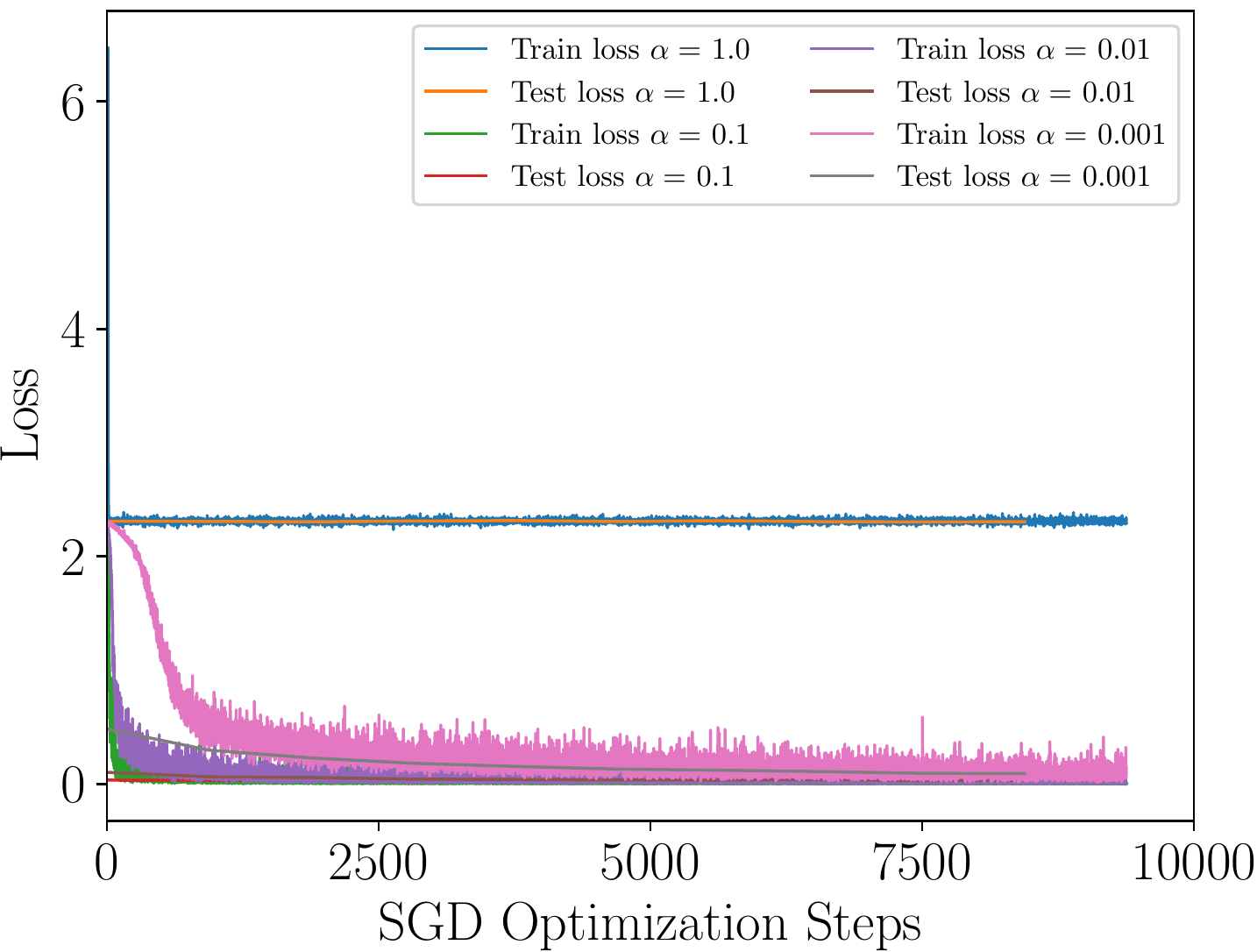} & 
		\includegraphics[width=.44\textwidth]{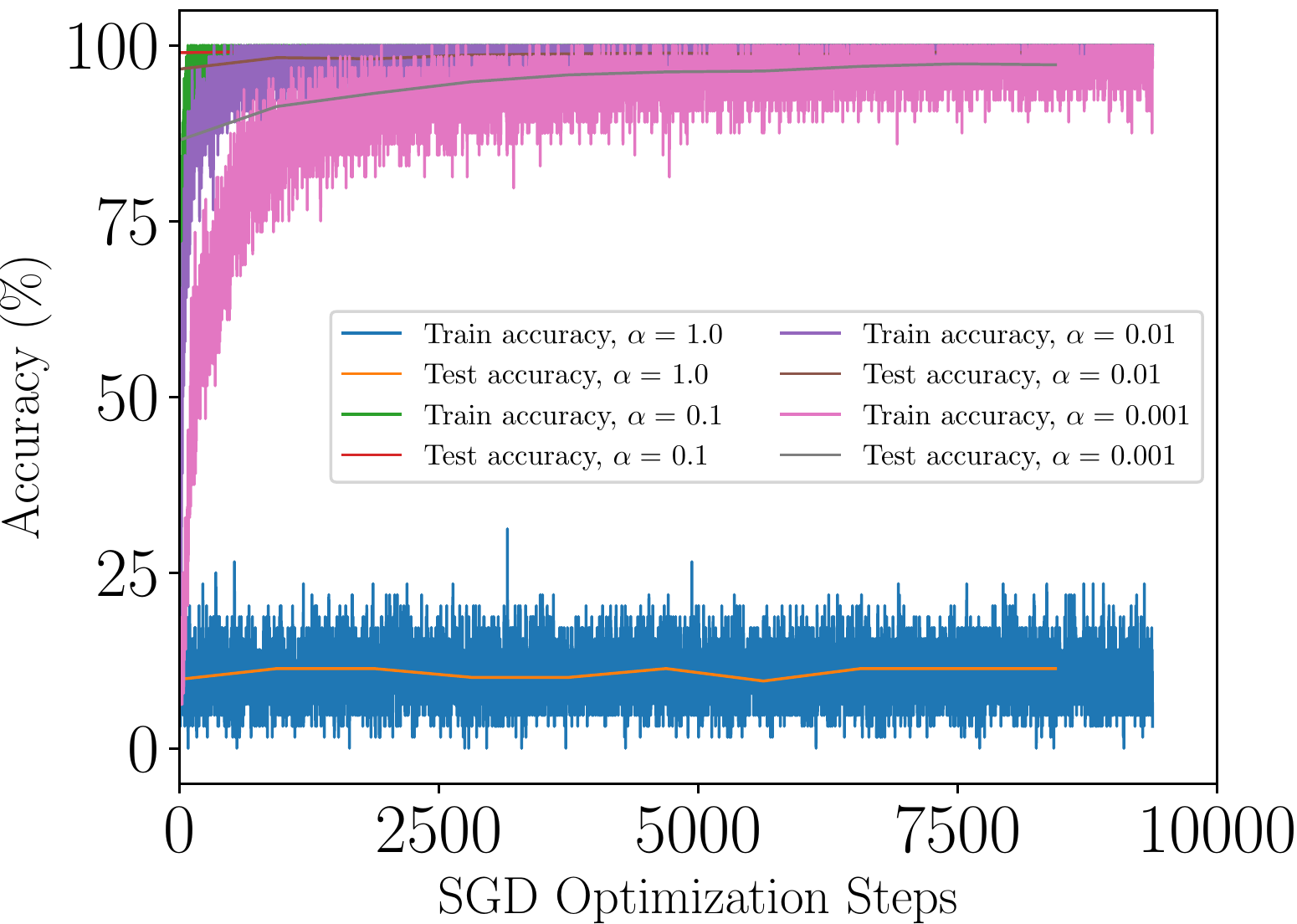}\\
		(c) & (d)\\
		\includegraphics[width=.44\textwidth]{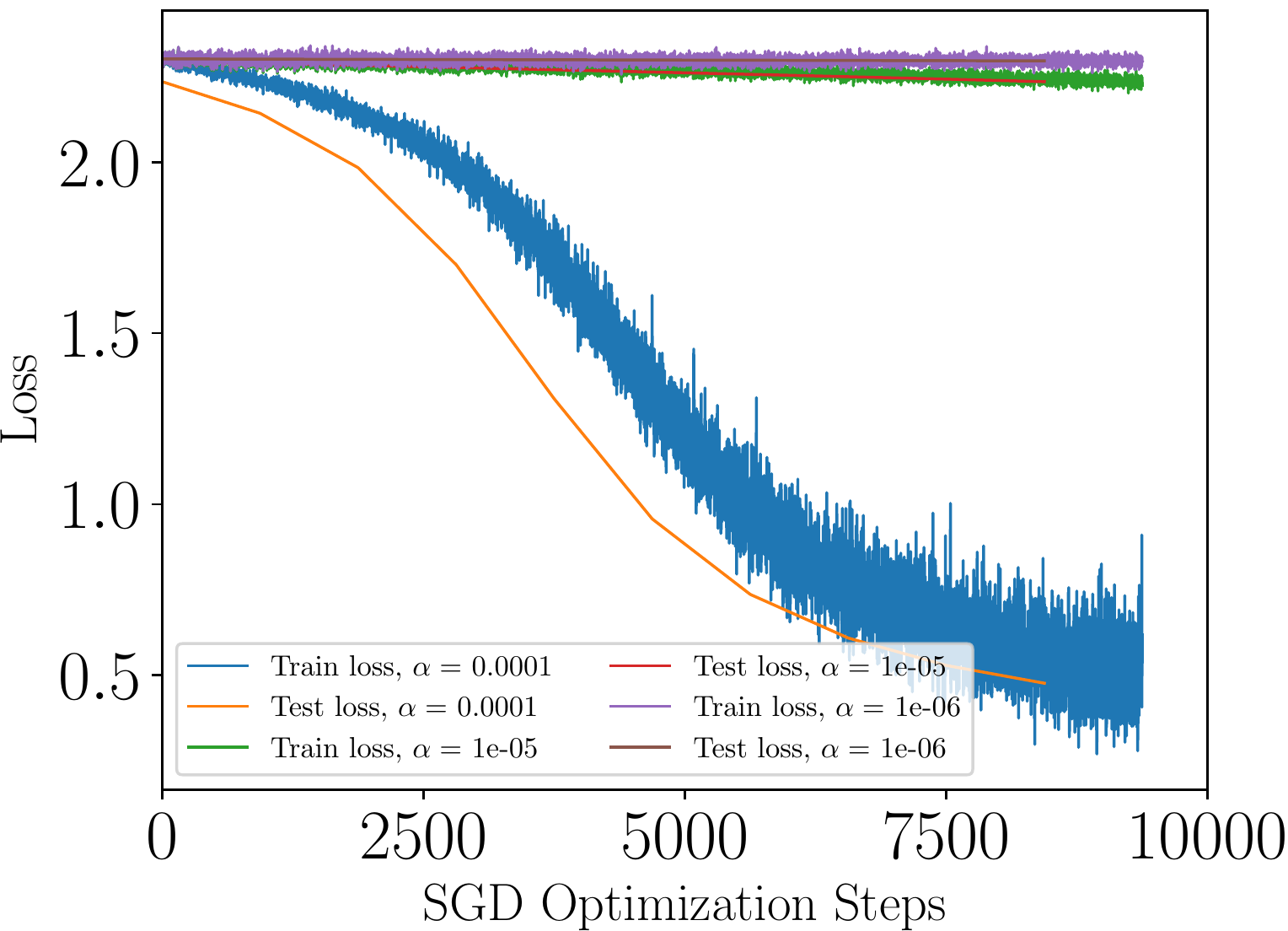} & 
		\includegraphics[width=.44\textwidth]{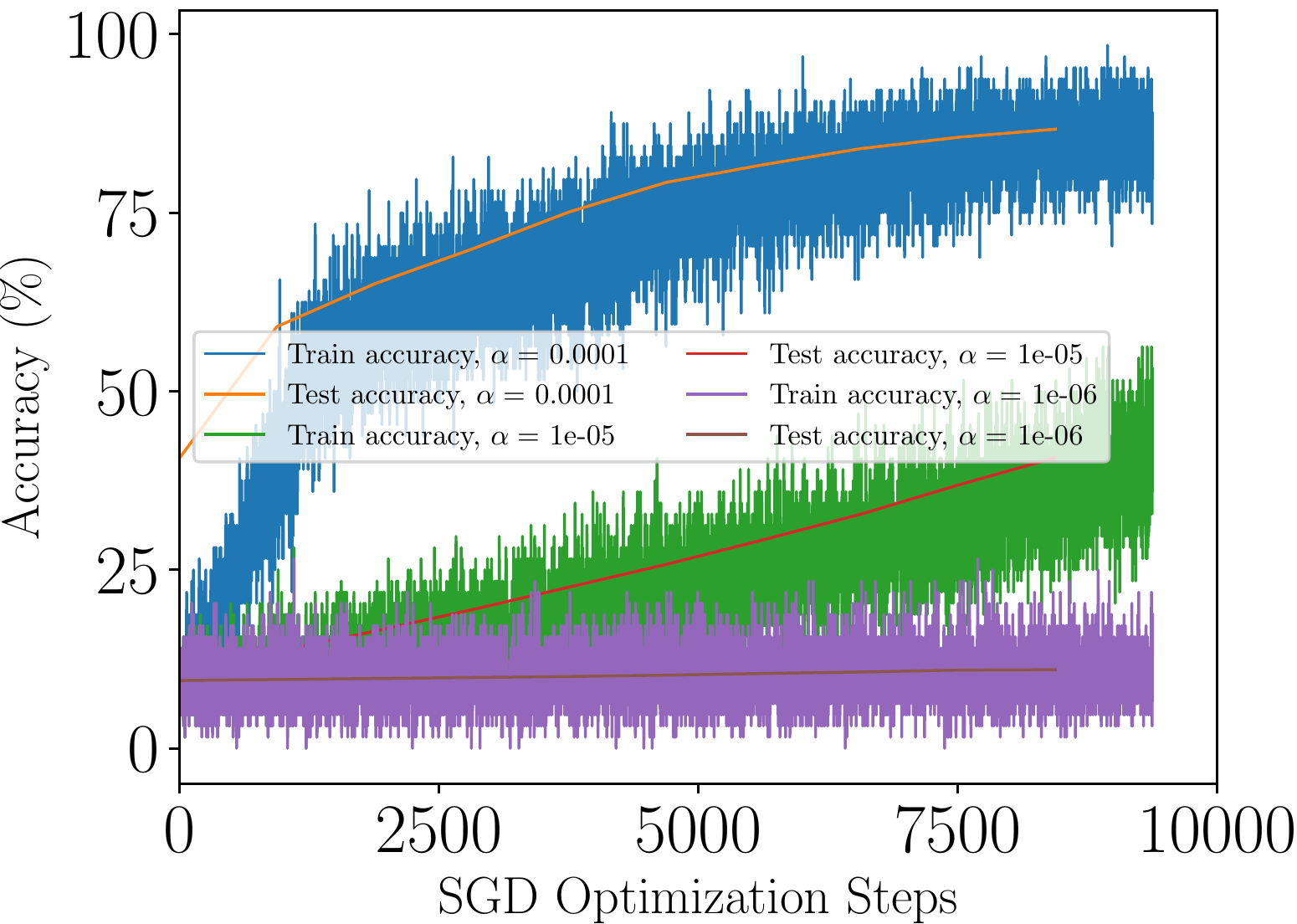}\\
		(e) & (f)
	\end{tabular}
	\caption{(a) \& (b) Loss and accuracy for the training and test sets, using L-BFGS line-search and L-BFGS trust-region methods for $m=20$ \citep{Rafati-et-al:2018:EUSIPCO}.   (c) \& (d) Loss and accuracy for the training and test sets using SGD with different learning rates $\alpha \in [1.0,0.1,0.01,0.001]$. (e) \& (f) Loss and accuracy for the training and test sets using SGD with small learning rates $\alpha \in [10^{-4},10^{-5},10^{-6}]$.}\label{fig:lossacc}
\end{figure*}

We also report that the TRMinATR significantly improves on the computational efficiency of the line-search method when using larger batch sizes. This could be the result of the line-search method's need to satisfy certain Wolfe conditions at each iteration. There is also an associated computational cost when verifying that the conditions for sufficient decrease are being met. When the batch size decreases, the trust-region method continues to outperform the line-search method. This is especially true when less information is used in the Hessian approximation (see Figure \ref{fig:time}).

\begin{figure}[htb!]
	\centering
	\includegraphics[width=0.6\textwidth]{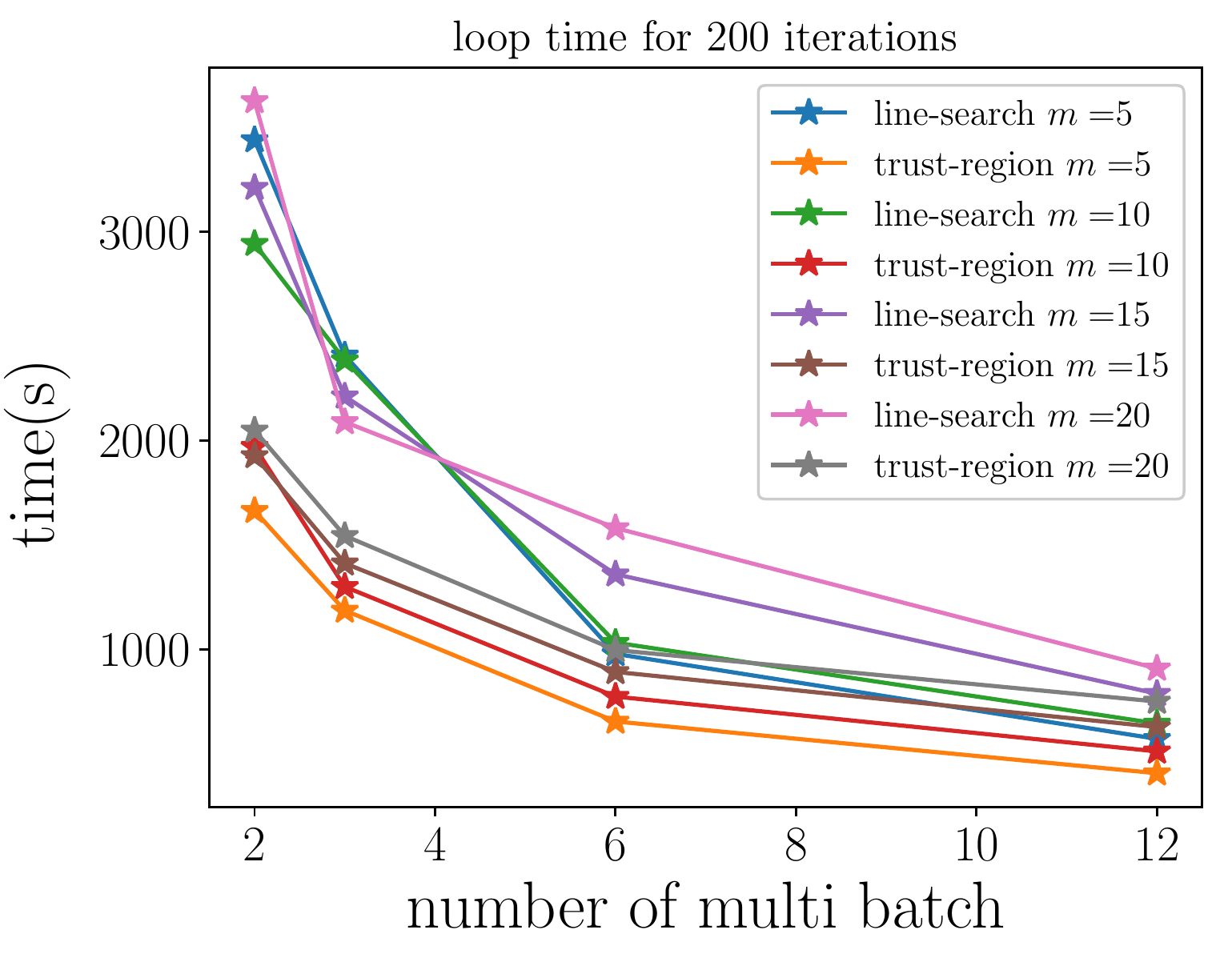}
	\caption{We compare the loop time for 200 iterations of the line-search and trust-region quasi-Newton algorithms for different batch sizes. As the number of multi batches increase, the size of each batch decreases. Both methods were tested using different values of the memory parameter $m$ \citep{Rafati-et-al:2018:EUSIPCO}. }\label{fig:time}
\end{figure}

\section{Application to Deep Reinforcement Learning}
\label{sec:rl}
\subsection{Reinforcement Learning Problem}
The reinforcement learning (RL) problem, -- a class of machine learning -- is that of learning through interaction with an \emph{environment}. The learner and decision maker is called the \emph{agent} and everything outside of the agent is called the \emph{environment}. The agent and environment interact over a sequence of discrete time steps, $t = 0,1,2,\dots,T$. At each time step, $t$, the agent receives a state, $s_t=s$, from the environment, takes an action, $a_t=a$, and one time step later, the environment sends a reward, $r_{t+1}=r \in \mathbb{R}$, and an updated state, $s_{t+1} = s'$ (see Figure \ref{f:agent-env}). Each cycle of interaction, $e = (s,a,r,s')$ is called a transition \emph{experience} (or a trajectory).
\begin{figure}[hbt!]
	\centering
	\includegraphics[width=0.5\textwidth]{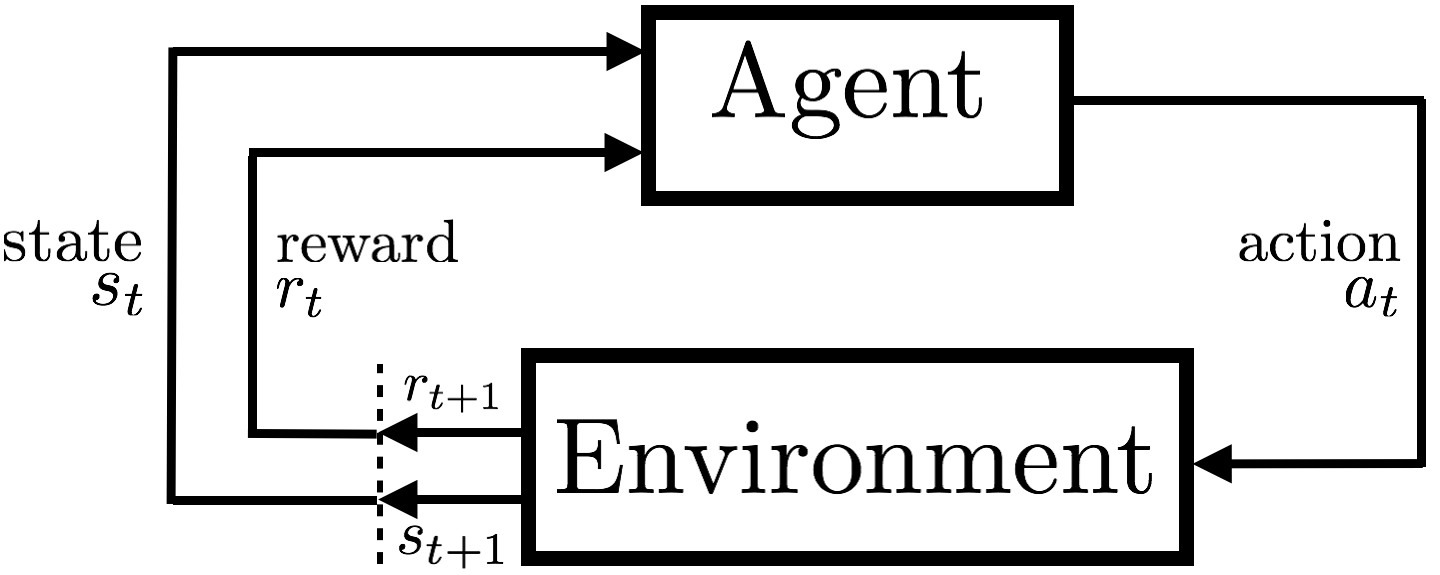}
	\caption{The agent/environment interaction in reinforcement learning. Adopted from \citep{RL-Book:Sutton:Barto:1998}}
	\label{f:agent-env}
\end{figure}
In an RL problem, the agent should implement a policy, $\pi$, from states, $\mathcal{S}$, to possible actions, $\mathcal{A}$. The objective of RL agent is to find an \emph{optimal policy} (best strategy), $\pi^*$, that maximizes its expected value of the \emph{return}, $G_t$, which is the cumulative sum of future rewards from the environment,
given by
\begin{align}
G_t \triangleq r_{t+1} + \gamma r_{t+2} + \gamma^2 r_{t+3} + \dots= \sum_{t'=t}^T \gamma^{t'-t} r_{t'+1},
\end{align}
where $\gamma \in (0,1]$ is a discount factor and $T \in \mathbb{N}$ is a final step (which can also be infinity)~\citep{RL-Book:Sutton:Barto:2017}. 
The optimal policy $\pi^*$ is defined as
\begin{align}
\pi^* = \arg\max\limits_{\pi} \mathbb{E}_{\pi} [ G_t ],
\label{eq:RL--optimization-probelm}
\end{align}

Reinforcement Learning is a class of solution methods for solving Markov Decision Processes (MDPs), when the agent does not have prior access to the environment models, i.e. the state transition probabilities, $\mathbb{P}(s'|s,a)$, and the reward function, $\mathcal{R}(s,a)$. Instead, the agent only perceives experiences (or trajectories) from interaction with the environment. The agent can save a limited memory of  past experiences (or history) in the set $\D$. It is important to note that each experience, $(s,a,s',r)$, is an example of the joint conditional probability distribution, $p(s',r|s,a)$. Thus the experience memory plays the role of the training data in RL.

It is often useful to define a parametrized value function $Q(s,a;w)$ to estimate the expected value of the return. Q-learning is a model-free RL algorithm that learns a policy without learning a model of the environment. Q-learning algorithm attempts to find the optimal value function by minimizing the expected risk, $L(w)$
\begin{align}
\min_{w \in \mathbb{R}^n} L(w) \triangleq \frac{1}{2}\mathbb{E}_{(s,a,r,s') \sim p} \Big[ \big( \Y - Q(s,a;w) \big)^2\Big],
\label{eq:expected-risk}
\end{align}
where $\Y = r + \max_{a'} Q(s',a';w)$ is the target value for the expected return based on the \emph{Bellman's optimality} equations \citep{RL-Book:Sutton:Barto:1998}. Once the value function is found, the policy function, $\pi$, can be found as
\begin{align}
\pi(s;w) = \argmax_{a \in \A}Q(s,a;w).
\end{align}

\subsection{Empirical Risk Minimization in Deep Reinforcement Learning}
In practice, the probability distribution over the trajectories, $p$, is unknown. Therefore, instead of minimizing the expected risk, $L(w)$ in \eqref{eq:expected-risk}, we can define an empirical risk minimization problem over a memory of agent's observed experiences, $\D$, as follows
\begin{align}
\min_{w \in \mathbb{R}^n} \L (w) \triangleq \frac{1}{2|\D|}\sum_{(s,a,r,s') \in \D} \Big[ \big( \Y - Q(s,a;w) \big)^2\Big].
\label{eq:empirical-risk}
\end{align}

At each optimization step, $k$, a small set of experiences, $J_k$, are randomly sampled from the \emph{experience replay memory}, $\D$. This sample is used to compute an stochastic gradient of the objective function, $\nabla \L (w)^{J_k}$, as an approximation for the true gradient, $\nabla \L (w)$, 
\begin{align}
\nabla \L (w)^{(J_k)} \triangleq \frac{-1}{|J_k|}\sum_{e \in J_k} \Big[ \big( \Y - Q(s,a;w) \big) \nabla Q \Big].
\label{eq:stochastic-gradient}
\end{align}

\subsection{L-BFGS line-search deep Q-learning method}
In this section, we propose a novel algorithm for the optimization problem in deep Q-Learning framework, based on the limited-memory BFGS method using a line search strategy. This algorithm is designed to be efficient for parallel computations on GPU. Also the experience memory $\D$ is emptied after each gradient computation, hence the algorithm needs much less RAM memory. 

Inspired by \cite{Berahas2016multibatch}, we use the overlap between the consecutive multi-batch samples $O_k = J_k \cap J_{k+1}$ to compute $y_k$  as 
\begin{align}
\y_k = \nabla \L(w_{k+1})^{(O_k)} -  \nabla \L(w_{k})^{(O_k)}.
\label{eq:y-overlap}
\end{align} 
The use of overlap to compute $y_k$ has been shown to result in more robust convergence in L-BFGS since L-BFGS uses  gradient differences to update the Hessian approximations (see \cite{Berahas2016multibatch} and \cite{Erway-etal-2018-arXiv}).

At each iteration of optimization we collect experiences in $\D$ up to batch size $b$ and use the entire experience memory $\D$ as the overlap of consecutive samples $O_k$. 
For computing the gradient $g_k = \nabla \L(w_k)$, we use the $k$th sample, $J_k = O_{k-1} \cup O_k$
\begin{align}
\nabla \L (w_k)^{(J_k)} = \frac{1}{2}(\nabla \L (w_k)^{(O_{k-1})} + \nabla \L (w_k)^{(O_{k})}).
\label{eq:gradient-Jk}
\end{align}    
Since $\nabla \L(w_k)^{(O_{k-1})}$ is already computed to obtain $\y_{k-1}$ in the previous iteration, we only need to compute $\nabla \L^{(O_k)}(w_{k})$, given by 
\begin{align}
\nabla \L (w_k)^{(O_k)} = \frac{-1}{|\D|}\sum_{e \in D} \Big[ \big( \Y - Q(s,a;w_k) \big) \nabla Q \Big].
\label{eq:overlap-gradient}
\end{align}    
Note that in order to obtain $\y_k$, we only need to compute $\nabla \L (w_{k+1})^{(O_{k})}$ since $\nabla \L (w_k)^{(O_k)}$ is already computed when we computed the gradient in \eqref{eq:gradient-Jk}. 

The line search multi-batch L-BFGS optimization algorithm for deep Q-Leaning is provided in Algorithm \ref{Algo:DQN+L-BFGS}.
\begin{algorithm}
	\begin{algorithmic}
		\State \textbf{Inputs:} batch size $b$, L-BFGS memory $m$, exploration rate $\epsilon$ 
		\State \textbf{Initialize} experience memory $\mathcal{D} \gets \emptyset$ with capacity $b$
		\State \textbf{Initialize} $w_0$, i.e. parameters of $Q(.,.;w)$ randomly
		\State \textbf{Initialize} optimization iteration $k \gets 0$
		\For{ episode $=1,\dots,M$} 
		\State Initialize state $s \in \mathcal{S}$
		\Repeat{ for each step $t = 1,\dots,T$} 					
		\State compute $Q(s,a;w_k)$
		\State $a\gets$\texttt{EPS-GREEDY}$(Q(s,a;w_k),\epsilon)$
		\State Take action $a$
		\State Observe next state $s'$ and external reward $r$ 
		\State Store transition experience $e=\{s,a,r,s'\}$ to $\mathcal{D}$
		\State $s \gets s'$
		\Until{$s$ is terminal or intrinsic task is done}
		\If{$|\mathcal{D}| == b$}
		\State $O_k \gets \mathcal{D}$
		\State Update $w_k$ by performing \textbf{optimization step}
		\State $\mathcal{D} \gets \emptyset$
		\EndIf
		\EndFor			
	\end{algorithmic}
	\textbf{========================================}\\
	\textbf{Multi-batch line search L-BFGS Optimization step:}
	\begin{algorithmic}
		\State Compute gradient $g^{(O_k)}_k$
		\State Compute gradient $g^{(J_k)}_k \gets \frac{1}{2} g^{(O_k)}_k + \frac{1}{2} g^{(O_{k-1})}_k$
		\State Compute $p_k = - B_k^{-1} g^{(J_k)}_k$ using Algorithm \ref{Algo:L-BFGS-two-loop-recursion} 
		\State Compute $\alpha_k$ by satisfying the Wolfe Conditions \eqref{eqn:Wolfe-Conditions} 
		\State Update iterate $w_{k+1} = w_{k} + \alpha_k p_{k}$
		\State $\s_{k} \gets w_{k+1} - w_{k}$
		\State Compute $g^{(O_k)}_{k+1} = \nabla \L(w_{k+1})^{(O_k)}$ 
		\State $\y_{k} \gets g^{(O_k)}_{k+1} - g^{(O_k)}_{k}$
		\State Store $\s_{k}$ to $S_{k}$ and $\y_{k}$ to $Y_k$ and remove  oldest pairs
		\State $k \gets k+1$
		
	\end{algorithmic}
	\caption{Line search Multi-batch L-BFGS Optimization for Deep Q Learning.}
	\label{Algo:DQN+L-BFGS}
\end{algorithm}

\subsection{Convergence Analysis}
\label{sec:convergence-analysis}
In this section, we present a convergence analysis for our deep Q-learning with multi-batch line-search L-BFGS optimization method (Algorithm \ref{Algo:DQN+L-BFGS}). We also provide an analysis for optimality of the state action value function. We then provide a comparison between the computation time of our deep L-BFGS Q-learning method (Algorithm \ref{Algo:DQN+L-BFGS}) and that of DeepMind's Deep Q-learning algorithm \citep{DeepMind:Nature:2015},  which uses a variant of the SGD method.   

\subsection{Convergence for the Empirical Risk}
To analyze the convergence properties of empirical risk function $\L(w)$ in \eqref{eq:empirical-risk} we assume that
\begin{subequations}
	\begin{align}
	&\L(w) \textrm{ is strongly convex and twice differentiable}. \label{eq:assumption-1} \\
	&\textrm{Given } w,\textrm{ there are } \lambda,\Lambda>0 \textrm{ s.t. } \lambda I \preceq \nabla ^2\L(w) \preceq \Lambda I, \label{eq:assumption-2}\\
	&\textrm{Given } w,\textrm{ there is } \eta > 0 \textrm{ such that } \| \nabla \L(w) \|^2 \leq \eta^2. \label{eq:assumption-3}
	\end{align}
	\label{eq:assumptions}
\end{subequations}
In \eqref{eq:assumption-2} we assume that the eigenvalues of the Hessian matrix are bounded, and in \eqref{eq:assumption-3} we assume that gradients are not unbounded. 
\begin{lemma}
	Given $w$, there exist $\lambda',\Lambda'>0$ such that $\lambda' I \preceq H_k \preceq \Lambda' I$.
	\label{lemma:1}
\end{lemma}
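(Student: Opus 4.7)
The plan is to bound the eigenvalues of $B_k = H_k^{-1}$ above and below by positive constants depending only on $\lambda$, $\Lambda$, the memory parameter $m$, and the dimension $n$, and then invert to obtain the bounds on $H_k$. This follows the classical Liu--Nocedal strategy for L-BFGS matrices, adapted to the compact representation used here.

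The first step is to extract curvature-pair inequalities from assumption \eqref{eq:assumption-2}. Writing $\y_i = \bar G_i \s_i$ with $\bar G_i \triangleq \int_0^1 \nabla^2 \L(w_i + \tau \s_i)\,d\tau$, the strong convexity and Hessian upper bound give
\[
\lambda \|\s_i\|^2 \;\le\; \y_i^T \s_i = \s_i^T \bar G_i \s_i \;\le\; \Lambda \|\s_i\|^2,
\qquad \|\y_i\|^2 = \s_i^T \bar G_i^2 \s_i \;\le\; \Lambda^2 \|\s_i\|^2.
\]
In particular $\y_i^T \s_i > 0$, so the BFGS update \eqref{eqn:bfgs} is well defined and preserves positive definiteness. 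Moreover $\gamma_{k+1} = \y_k^T\y_k/\y_k^T\s_k \in [\lambda,\Lambda]$, so the initializer $B_0 = \gamma_{k+1} I$ used to build the L-BFGS matrix has all eigenvalues in $[\lambda,\Lambda]$.

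For the upper bound on $\lambda_{\max}(B_k)$ I would take the trace of \eqref{eqn:bfgs}. The rank-one subtraction contributes a nonpositive term, while $\mathrm{tr}(\y_i\y_i^T/\y_i^T\s_i) = \|\y_i\|^2/\y_i^T\s_i \le \Lambda$, so each of the at most $m$ L-BFGS updates increases the trace by at most $\Lambda$. Therefore $\mathrm{tr}(B_k) \le (n+m)\Lambda$, giving $\lambda_{\max}(B_k) \le (n+m)\Lambda =: M_1$. For the lower bound I would use the well-known determinant identity
\[
\det(B_{i+1}) = \det(B_i)\,\frac{\y_i^T\s_i}{\s_i^T B_i \s_i},
\]
applied along the $m$ update steps. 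The numerator is $\ge \lambda \|\s_i\|^2$ and the denominator is $\le M_1 \|\s_i\|^2$, so each ratio is $\ge \lambda/M_1$. Combined with the starting determinant $\det(B_0) \ge \lambda^n$, this gives $\det(B_k) \ge \lambda^n (\lambda/M_1)^m$. Using $\det(B_k) \le \lambda_{\min}(B_k)\,M_1^{n-1}$ then yields $\lambda_{\min}(B_k) \ge \lambda^n(\lambda/M_1)^m/M_1^{n-1} =: M_2 > 0$. Inverting, $\lambda' := 1/M_1$ and $\Lambda' := 1/M_2$ satisfy the claim.

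The main obstacle I anticipate is handling the fact that $\y_k$ is constructed from the overlap-sampled gradients in \eqref{eq:y-overlap} rather than from the full gradient of $\L$, so the identity $\y_k = \bar G_k \s_k$ is with respect to the subsampled objective $\L^{(O_k)}$ rather than $\L$ itself. To make the curvature-pair bounds rigorous one must assume (or verify) that \eqref{eq:assumption-2} transfers to $\L^{(O_k)}$ uniformly over the samples drawn during the run. Once that is in hand, the trace/determinant machinery above carries over verbatim; the delicate bookkeeping is ensuring that the constants $\lambda,\Lambda$ in \eqref{eq:assumptions} can indeed be taken uniformly over all iterates $w_k$ and all overlap batches $O_k$ that appear in Algorithm \ref{Algo:DQN+L-BFGS}.
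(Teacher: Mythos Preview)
Your argument is correct and is precisely the classical Liu--Nocedal trace/determinant bound for L-BFGS matrices. The paper itself does not actually carry out any of these steps: its entire proof of Lemma~\ref{lemma:1} is a one-line deferral to the references \cite{Berahas2016multibatch,byrd2016stochastic}, which in turn contain exactly the argument you outline. So you have supplied substantially more than the paper does, and what you supplied matches the cited sources.

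One remark: your observation about the overlap-sampled $\y_k$ in \eqref{eq:y-overlap} is well taken. The paper states assumptions \eqref{eq:assumption-1}--\eqref{eq:assumption-2} for the full empirical risk $\L$, not for the subsampled $\L^{(O_k)}$, and then silently applies the resulting eigenvalue bounds as if they held for the curvature pairs built from subsampled gradients. Your identification of this gap, and the proposed fix of requiring the Hessian bounds to hold uniformly over the sampled losses, is exactly the assumption made explicit in \cite{byrd2016stochastic}; the paper simply omits it.
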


\begin{proof}
	Due to the assumptions \eqref{eq:assumption-1} and \eqref{eq:assumption-2}, the eigenvalues of the positive-definite matrix $H_k$ are also bounded \citep{Berahas2016multibatch,byrd2016stochastic}.   
\end{proof}

\begin{lemma}
	Let $w^*$ be a minimizer of $\L$. Then, for all $w$, we have $2 \lambda (\L(w) - \L(w^*) \leq \| \nabla \L(w) \|^2$. 
	\label{lemma:2}
\end{lemma}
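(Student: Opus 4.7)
My plan is to invoke the strong convexity of $\L$ in its standard quadratic-lower-bound form and then minimize that bound to obtain the Polyak--\L ojasiewicz-type inequality being claimed. Specifically, since $\L$ is twice continuously differentiable and assumption \eqref{eq:assumption-2} gives $\nabla^2\L(w) \succeq \lambda I$ everywhere, applying Taylor's theorem \eqref{eq:taylor} with an integral/mean-value form of the remainder yields the global quadratic minorant
\begin{equation*}
\L(y) \;\geq\; \L(w) + \nabla \L(w)^T (y - w) + \frac{\lambda}{2}\|y - w\|^2 \qquad \text{for all } w, y \in \RR^n.
\end{equation*}

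Next, I would fix $w$ and minimize the right-hand side over $y$. Since it is a strictly convex quadratic in $y$, setting its gradient to zero gives the unconstrained minimizer $\hat{y} = w - \tfrac{1}{\lambda}\nabla \L(w)$, and substituting back produces
\begin{equation*}
\L(y) \;\geq\; \L(w) - \frac{1}{2\lambda}\|\nabla \L(w)\|^2 \qquad \text{for all } y \in \RR^n.
\end{equation*}
Taking $y = w^*$ on the left, and using that $\L(w^*) \leq \L(y)$ trivially holds with equality here, gives $\L(w^*) \geq \L(w) - \tfrac{1}{2\lambda}\|\nabla \L(w)\|^2$. Rearranging this inequality yields exactly $2\lambda\bigl(\L(w) - \L(w^*)\bigr) \leq \|\nabla \L(w)\|^2$, which is the desired statement.

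There is no real obstacle here; the only subtlety is justifying the global quadratic lower bound from the pointwise Hessian bound in \eqref{eq:assumption-2}, which follows from the second-order form of Taylor's theorem stated earlier in the excerpt together with the fact that $p^T \nabla^2 \L(w + tp) p \geq \lambda \|p\|^2$ for every $t \in (0,1)$. Once that bound is in hand, the minimization step is a one-line calculation and the conclusion is immediate.
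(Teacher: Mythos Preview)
Your argument is correct and is in fact the standard textbook derivation of the Polyak--\L ojasiewicz inequality from strong convexity: start from the quadratic lower bound implied by $\nabla^2\L \succeq \lambda I$, minimize over the free point, and evaluate at $w^*$. The paper takes a somewhat different route: it invokes a Nesterov-type inequality valid for any $\lambda$-strongly convex function,
\[
\L(w) \;\le\; \L(w^*) + \nabla \L(w^*)^T(w-w^*) + \frac{1}{2\lambda}\|\nabla \L(w)-\nabla \L(w^*)\|^2,
\]
and then simply substitutes $\nabla \L(w^*)=0$. Your approach is more self-contained, needing only Taylor's theorem and the Hessian bound already stated in the text, whereas the paper's version offloads the work to an externally cited inequality (whose proof typically goes through Fenchel duality and smoothness of the conjugate). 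Both arrive at the same conclusion in one step once the respective starting inequality is in hand; yours is arguably the more elementary of the two.
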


\begin{proof}
	For any convex function, $\L$ and for any two points, $w$ and $w^*$, one can show that
	\begin{align}
	\L(w) \leq \L(w^*) + \nabla \L(w^*)^T (w-w^*) + \frac{1}{2\lambda}\| \nabla \L(w) - \nabla \L(w^*)\|^2.
	\label{eq:nestrov}
	\end{align}
	(see  \citep{Nesterov:2013}).  
	Since $w^*$ is a minimizer of $\L$,  $\nabla \L(w^*)=0$ in \eqref{eq:nestrov}, which completes the proof.
\end{proof}

\begin{theorem}
	Let $w_k$ be iterates generated by Algorithm \ref{Algo:DQN+L-BFGS}, and assume that the step length, $\alpha_k$, is fixed. The upper bound for the  empirical risk offset from the true minimum value is 
	\begin{align}
	\begin{split}
	\L(w_k) - \L(w^*) \leq (1 - 2 \alpha \lambda \lambda' )^k \big[\L(w_0) - \L(w^*)\big] \\+ \big[1 - (1 - 2 \alpha \lambda \lambda')^k\big]\frac{\alpha^2 \Lambda'^2 \Lambda \eta^2}{4 \lambda' \lambda}.
	\end{split}
	\label{eq:L-bound}
	\end{align}  
\end{theorem}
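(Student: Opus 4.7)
The plan is to establish a one-step contraction inequality of the form $a_{k+1} \le (1-2\alpha\lambda\lambda')\,a_k + C$ for $a_k \triangleq \L(w_k) - \L(w^*)$ and a constant $C$ proportional to $\alpha^2 \Lambda \Lambda'^2 \eta^2$, and then unroll it as a geometric series to obtain the closed-form bound \eqref{eq:L-bound}.

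First, I would apply Taylor's theorem to $\L$ at $w_k$ with step $p_k = w_{k+1} - w_k = -\alpha H_k g_k$, where $g_k$ is the gradient used by the algorithm and $H_k = B_k^{-1}$. Using the curvature upper bound $\nabla^2 \L \preceq \Lambda I$ from assumption \eqref{eq:assumption-2}, the quadratic form in the Taylor remainder is controlled, giving
$$
\L(w_{k+1}) \le \L(w_k) - \alpha\, \nabla\L(w_k)^T H_k g_k + \frac{\alpha^2 \Lambda}{2}\bigl\|H_k g_k\bigr\|^2.
$$
Next I would bound the two right-hand terms using Lemma \ref{lemma:1} and assumption \eqref{eq:assumption-3}: the inequality $H_k \succeq \lambda' I$ yields $\nabla\L(w_k)^T H_k \nabla\L(w_k) \ge \lambda'\|\nabla\L(w_k)\|^2$ for the linear term, while $H_k \preceq \Lambda' I$ together with $\|g_k\|^2 \le \eta^2$ gives $\|H_k g_k\|^2 \le \Lambda'^2 \eta^2$ for the quadratic term. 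Subtracting $\L(w^*)$ from both sides and invoking Lemma \ref{lemma:2}, which provides the Polyak--\L{}ojasiewicz-type inequality $\|\nabla\L(w_k)\|^2 \ge 2\lambda\,a_k$, converts the linear decrease into a multiplicative contraction and produces
$$
a_{k+1} \le (1 - 2\alpha\lambda\lambda')\,a_k + \frac{\alpha^2 \Lambda \Lambda'^2 \eta^2}{2}.
$$

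Finally, a short induction on $k$ (or equivalently summing the geometric series $1 + c + \cdots + c^{k-1}$ with $c = 1-2\alpha\lambda\lambda'$) converts the recursive bound into
$$
a_k \le c^k a_0 + \bigl(1 - c^k\bigr)\frac{C}{1-c},
$$
which, after substitution of $C$ and $1-c$, yields exactly \eqref{eq:L-bound} (up to the constant factor in the second summand). Implicit in the step-size analysis is the requirement $\alpha < 1/(2\lambda\lambda')$ so that $c \in (0,1)$ and the series converges; this is the regime in which the Wolfe line search produces a usable fixed $\alpha$.

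The main obstacle will be handling the fact that $g_k$ in Algorithm \ref{Algo:DQN+L-BFGS} is actually the multi-batch stochastic estimator $g_k^{(J_k)}$, not the deterministic full gradient $\nabla\L(w_k)$. Strictly, the inner-product term produced by Taylor's theorem is $\nabla\L(w_k)^T H_k g_k^{(J_k)}$, so one must either (i) pass to a conditional expectation, using that the overlap construction \eqref{eq:gradient-Jk} yields an unbiased estimator of $\nabla\L(w_k)$ given the filtration generated by $w_0,\ldots,w_k$, so that $\EE[\nabla\L(w_k)^T H_k g_k^{(J_k)} \mid w_k] = \nabla\L(w_k)^T H_k \nabla\L(w_k)$, or (ii) absorb the bias into the constant $C$ via assumption \eqref{eq:assumption-3}. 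A related subtlety is that $H_k$ is itself a function of the past iterates, so measurability must be tracked carefully before the conditional expectation is taken. Once this stochastic bookkeeping is in place, the remainder of the argument is classical strongly-convex descent analysis.
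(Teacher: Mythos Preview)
Your proposal follows essentially the same route as the paper: a Taylor expansion at $w_k$ with the Hessian upper bound $\Lambda$, then Lemma~\ref{lemma:1} and assumption~\eqref{eq:assumption-3} to control the two right-hand terms, Lemma~\ref{lemma:2} to convert the gradient lower bound into a multiplicative contraction, and finally a geometric unrolling. The paper's own proof is in fact terser than yours---it simply writes $g_k = \nabla\L(w_k)$ without ever confronting the stochastic multi-batch issue you flag, and it states the one-step inequality with the constant $\tfrac{\alpha^2 \Lambda'^2 \Lambda \eta^2}{4\lambda'\lambda}$ already in place (so the ``up to a constant factor'' caveat you note is present in the paper as well, just unremarked); your discussion of unbiasedness and measurability of $H_k$ goes beyond what the paper supplies.
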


\begin{proof}
	Using the Taylor expansion of
	\[
	\L(w_{k+1}) = \L(w_k - \alpha_k H \nabla \L(w_k)) 
	\]
	around $w_k$, we have
	\begin{align}
	\L(w_{k+1}) \leq \L(w_{k}) - \alpha_k \nabla \L(w_k)^T H_k \nabla \L(w_k) + \frac{\Lambda}{2} \|  \alpha_k \nabla \L(w_k)^T H_k \nabla \L(w_k) \|^2.
	\end{align}
	By applying assumptions \eqref{eq:assumptions} and Lemmas \ref{lemma:1} and \ref{lemma:2} to the above inequality, we have
	\begin{align}
	&\L(w_{k+1}) \leq \L(w_{k}) - 2 \alpha_k \lambda' \lambda [\L(w_k) - \L(w^*)] + \frac{\alpha_k^2 \Lambda'^2 \Lambda \eta^2}{4 \lambda' \lambda}
	\end{align}
	By rearranging terms and using  recursion expression and recursion over $k$ we have the proof. For a more detailed proof see \cite{byrd2016stochastic} and \cite{Berahas2016multibatch}.    
\end{proof}
If the step size is bounded, $\alpha\in(0,1/2\lambda\lambda')$, we can conclude that the first term of the bound given in \eqref{eq:L-bound} is decaying linearly to zero when $k \to \infty$ and the constant residual term, $\frac{\alpha^2 \Lambda'^2 \Lambda \eta^2}{4 \lambda' \lambda}$, is the neighborhood of convergence.

\subsection{Value Optimality}
The Q-learning method has been proved to converge to the optimal value function if the step sizes satisfies $\sum_{k}\alpha_k = \infty$ and $\sum_{k}\alpha_k^2 < \infty$ \citep{Jaakola:1994:q-learning-convergence}. 
Now, we want to prove that Q-learning using the L-BFGS update also theoretically converges to the optimal value function under one additional condition on the step length, $\alpha_k$.
\begin{theorem}
	Let $Q^*$ be the optimal state-action value function and $Q_{k}$ be the Q-function with parameters $w_{k}$. Furthermore, assume that the gradient of $Q$ is bounded, $\| \nabla Q \|^2 \leq \eta''^2$, and the Hessian of $Q$ functions satisfy $\lambda''\preceq \nabla^2 Q \preceq \Lambda''$. We have
	\begin{align}
	\| Q_{k+1} -  Q^* \|_{\infty} <  \prod_{j=0}^k \Big[1 - \alpha_j \eta''^2 \lambda + \frac{\alpha_j \eta''^2 \Lambda'^2 \Lambda''}{2}\Big]^{k} \| Q_{0} -  Q^* \|_{\infty}.
	\label{eq:value-bound}
	\end{align}
	If step size $\alpha_k$ satisfies
	\begin{align}
	\Big|1 - \alpha_k \eta''^2 \lambda + \frac{\alpha_k \eta\eta' \Lambda'^2 \Lambda''}{2}\Big| \le \mu <1, \quad \text{for all $k$},
	\label{eq:alpha-cond-value-bound}
	\end{align}
	$Q(.,.;w_k)$ ultimately will converge to $Q^*$, as $k \to \infty$. 
	\label{theorem:value-optimality}
\end{theorem}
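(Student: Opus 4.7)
The plan is to relate $\|Q_{k+1}-Q^*\|_\infty$ to $\|Q_k - Q^*\|_\infty$ through a second-order Taylor expansion of $Q(s,a;\cdot)$ around $w_k$, extract a one-step contraction factor, and then iterate. First I would write $w_{k+1}=w_k+\alpha_k p_k$ with $p_k=-H_k g_k$ and expand pointwise
\[
Q(s,a;w_{k+1}) = Q(s,a;w_k)-\alpha_k\,\nabla Q(s,a;w_k)^{T}H_k g_k + \tfrac{\alpha_k^2}{2}\, g_k^{T}H_k\,\nabla^2 Q(s,a;\tilde w)\,H_k g_k,
\]
for some $\tilde w$ on the segment between $w_k$ and $w_{k+1}$. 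Subtracting $Q^*(s,a)$, taking absolute values, and passing to the sup over $(s,a)$, the hypotheses $\|\nabla Q\|^2\le \eta''^2$ and $\lambda'' I\preceq\nabla^2 Q\preceq\Lambda'' I$ together with Lemma~\ref{lemma:1} bound the linear and quadratic corrections by $\alpha_k\eta''\Lambda'\|g_k\|$ and $\tfrac{\alpha_k^2}{2}\Lambda'^2\Lambda''\|g_k\|^2$, respectively.

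Next I would tie $\|g_k\|$ back to $\|Q_k-Q^*\|_\infty$. Since $g_k=-\mathbb{E}[(Y-Q)\nabla Q]$ with $Y=r+\gamma\max_{a'}Q(s',a';w_k)$, Bellman optimality makes $Y-Q(s,a;w_k)$ controllable by $\|Q^*-Q_k\|_\infty$, and combined with $\|\nabla Q\|\le\eta''$ this gives a bound proportional to $\eta''\|Q_k-Q^*\|_\infty$. Lemma~\ref{lemma:2} applied to $\L$ then supplies the link $2\lambda(\L(w_k)-\L(w^*))\le\|\nabla\L(w_k)\|^2$, which is precisely what produces the $\alpha_k\eta''^2\lambda$ coefficient on the linear correction. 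Assembling the two pieces yields the per-step inequality
\[
\|Q_{k+1}-Q^*\|_\infty \le \Big|1-\alpha_k\eta''^2\lambda+\tfrac{\alpha_k\eta''^2\Lambda'^2\Lambda''}{2}\Big|\,\|Q_k-Q^*\|_\infty,
\]
and the stated product bound follows by a straightforward induction on $k$. The final convergence claim is then immediate from \eqref{eq:alpha-cond-value-bound}: each factor is bounded by $\mu<1$, so the product decays geometrically and $\|Q_k-Q^*\|_\infty\to 0$ as $k\to\infty$.

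The main obstacle will be making the reduction from $\|g_k\|$ to $\|Q_k-Q^*\|_\infty$ rigorous with the constants that appear in the claim. The classical Q-learning convergence argument exploits the $\gamma$-contraction of the Bellman operator, but here we are simultaneously doing a second-order expansion of the parameterized $Q$ and working with a multibatch stochastic gradient $g_k^{(J_k)}$ over the overlap set, so some care is needed about whether $g_k$ is treated as a true expectation or as its empirical realization, and how the Bellman contraction factor $\gamma$ is absorbed into the constants $\eta''$, $\lambda$, and $\Lambda$ that ultimately appear in the bound. Once that reduction is made cleanly, the remainder of the proof is routine term-by-term bookkeeping followed by the geometric-series style induction sketched above.
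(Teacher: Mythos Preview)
Your overall architecture (second-order Taylor expansion of $Q(s,a;\cdot)$, then a per-step contraction, then recursion) matches the paper's, but there is a genuine gap in how you extract the contraction factor from the linear term.

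You propose to subtract $Q^*$, take absolute values, and bound the linear and quadratic \emph{corrections} separately by $\alpha_k\eta''\Lambda'\|g_k\|$ and $\tfrac{\alpha_k^2}{2}\Lambda'^2\Lambda''\|g_k\|^2$. After this step you would have
\[
\|Q_{k+1}-Q^*\|_\infty \;\le\; \|Q_k-Q^*\|_\infty \;+\; \alpha_k\eta''\Lambda'\|g_k\| \;+\; \tfrac{\alpha_k^2}{2}\Lambda'^2\Lambda''\|g_k\|^2,
\]
and no matter how cleanly you reduce $\|g_k\|$ to a multiple of $\|Q_k-Q^*\|_\infty$ afterward, the resulting multiplicative factor is of the form $1+(\text{positive})$, not $1-(\text{positive})+(\text{positive})$. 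Once you take absolute values term-by-term, the sign of the linear correction is lost, and with it the mechanism that makes the factor drop below $1$.

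The paper avoids this by \emph{first} replacing the gradient with its single-experience approximation
\[
\nabla \L(w_k)\;\approx\;\bigl(Q(s,a;w_k)-Q^*(s,a)\bigr)\,\nabla Q(s,a;w_k),
\]
so that $(Q_k-Q^*)$ appears as a scalar multiplier in both the linear and quadratic terms of the Taylor expansion \emph{before} any absolute values are taken. One then writes $Q_{k+1}-Q^* = (Q_k-Q^*)\bigl[1-\alpha_k\nabla Q_k^T H_k\nabla Q_k + \tfrac{\alpha_k^2}{2}\nabla Q_k^T H_k\nabla^2 Q(\xi_k) H_k\nabla \L_k\bigr]$, takes the sup, and bounds the bracket directly. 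The negative sign on $\alpha_k\nabla Q_k^T H_k\nabla Q_k$ survives, and that is what produces the $-\alpha_k\eta''^2\lambda$ in the claimed factor. Your plan to route through Lemma~\ref{lemma:2} and a separate bound on $\|g_k\|$ does not recover this; Lemma~\ref{lemma:2} concerns $\L$, not $Q$, and in any case gives a lower bound on $\|\nabla\L\|^2$ rather than a signed identity linking $g_k$ to $(Q_k-Q^*)$ pointwise. The fix is simple: substitute the one-experience gradient expression into the Taylor expansion \emph{before} bounding, factor $(Q_k-Q^*)$ out, and only then take the sup and apply the eigenvalue and norm hypotheses to the bracket. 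After that, your induction and the conclusion from \eqref{eq:alpha-cond-value-bound} go through exactly as you describe.
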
  

\begin{proof}
	First we derive the effect of the parameter update from $w_k$ to 
	\[
	w_{k+1} = w_k - \alpha_k H_k \nabla \L (w_k)
	\]
	on the optimality neighbor.       
	\begin{align}
	\| Q_{k+1} -  Q^* \|_{\infty} \triangleq \max_{s,a} \big| Q(s,a,w_{k+1}) - Q^*(s,a) \big|
	\end{align}
	We approximate the gradient using only one experience $(s,a,r,s')$,
	\begin{align}
	\nabla \L(w_k) \approx \big(Q(s,a;w_k) - Q^*(s,a;w_k) \big)\nabla Q_k(s,a;w_k),
	\label{eq:one-ex-grad}
	\end{align}
	Using Taylor's expansion to approximate $Q(s,a,w_{k+1})$ results in
	\begin{align}
	\begin{split}
	&Q(s,a;w_{k+1}) = Q(s,a;w_{k} - \alpha_k H_k \nabla \L(w_k)) \\
	&= Q(s,a;w_k) -  \alpha_k  \nabla \L_k^T H_k \nabla Q_k + \frac{\alpha_k^2}{2}  \nabla \L_k^T H_k \nabla^2 Q(\xi_k) H_k \nabla \L_k^T \\
	&= Q_k -  \alpha_k  (Q_k-Q^*) \nabla Q_k^T H_k \nabla Q_k + \frac{\alpha_k^2}{2} (Q_k-Q^*) \nabla Q_k^T H_k \nabla^2 Q(\xi_k) H_k \nabla \L_k^T,
	\end{split}
	\end{align}
	where $\xi$ is between $w_k$ and $w_{k+1}$,
	$Q_k \coloneqq Q(s,a;w_k)$, $\nabla Q_k \coloneqq \nabla Q(s,a;w_k)$, and $\nabla \L_k \coloneqq \nabla \L(w_k)$. We can use the above expression to compute $\| Q_{k+1} -  Q^* \|_{\infty}$:
	\begin{align}
	\begin{split}
	&\| Q_{k+1} -  Q^* \|_{\infty}= \\ 
	&\max_{s,a} \Big| (Q_k-Q^*)  {\Big[1 -\alpha_k \nabla Q_k^T H_k \nabla Q_k + \frac{\alpha_k^2}{2} \nabla Q_k^T H_k \nabla^2 Q(\xi_k) H_k \nabla \L_k\Big] \Big|_{(s,a)}}.
	\end{split}
	\label{eq:q-optimal-2}
	\end{align}
	If $\alpha_k$ satisfies
	\begin{align}
	\Big|1 -\alpha_k \nabla Q_k^T H_k \nabla Q_k +\frac{\alpha_k^2}{2} \nabla Q_k^T H_k \nabla^2 Q_k H_k \nabla \L_k\Big| 
	\le \mu < 1,
	\label{eq:cond-alpha-conv}
	\end{align}
	then 
	\begin{align}
	\| Q_{k+1} -  Q^* \|_{\infty} \le \mu \| Q_{k} -  Q^* \|_{\infty} \le \mu^{k+1}  \| Q_{0} -  Q^* \|_{\infty}.
	\label{eq:q-optimal-3}
	\end{align} 
	Therefore, $Q_k$ converges to $Q^*$ when $k \to \infty$. Considering our assumptions on the bounds of the eigenvalues of $\nabla^2 Q_k$ and $H_k$, we can derive \eqref{eq:alpha-cond-value-bound} from \eqref{eq:cond-alpha-conv}. Recursion on \eqref{eq:q-optimal-2} from $k=0$ to $k+1$ results in \eqref{eq:value-bound}.  
\end{proof}     
\subsection{Computation Time}
Let us compare the cost of deep L-BFGS Q-learning in Algorithm \ref{Algo:DQN+L-BFGS} with DQN algorithm in \citep{DeepMind:Nature:2015} that uses a variant of SGD. Assume that the cost of computing gradient is $\O(bn)$ where $b$ is the batch size. The real cost is probably less than this due to the parallel computation on GPUs. Let's assume that we run both algorithm for $L$ steps. We update the weights every $b$ steps. Hence there is $L/b$ maximum updates in our algorithm. The SGD batch size in \cite{DeepMind:Nature:2015}, $b_s$, is smaller than $b$, but the frequency of the update is high, $f \ll b$. Each iteration of the L-BFGS algorithm update introduces the cost of computing the gradient, $g_k^{(O_k)}$, which is $\mathcal{O}(bn)$, the cost of computing the search step, $p_k = -H_k g_k^{(O_k)}$, using L-BFGS two-loop recursion (Algorithm \ref{Algo:L-BFGS-two-loop-recursion}), which is $\mathcal{O}(4mn)$, and the cost of satisfying the Wolfe conditions \eqref{eqn:Wolfe-Conditions} to find a step size that usually satisfies for $\alpha=1$ and, in some steps, requires recomputing the gradient $z$ times. Therefore we have
\begin{align}
\begin{split}
\frac{\textrm{Cost of Algorithm \ref{Algo:L-BFGS-two-loop-recursion}}}{\textrm{Cost of DQN \citep{DeepMind:Nature:2015}}} = \frac{(L/b)(zbn + 4mn)}{(L/f)(b_s n)}
= \frac{fz}{b_s} + \frac{4fm}{b b_s}.
\end{split}
\end{align}
In our algorithm, we use a quite large batch size to compute less noisy gradients. With $b = 2048$, $b_s = 32$, $f = 4$, $z=5$, $m=20$, the runtime cost ratio will be around $0.63 < 1$. Although the per-iteration cost of the the SGD algorithm is lower than L-BFGS, the total training time of our algorithm is less than DQN \citep{DeepMind:Nature:2015} for the same number of RL steps due to the need for less frequent updates in the L-BFGS method.                

\subsection{Experiments on ATARI 2600 Games}
\label{sec:experiment}
We performed experiments using our approach (Algorithm \ref{Algo:DQN+L-BFGS}) on six ATARI 2600 games -- Beam-Rider, Breakout, Enduro, Q*bert, Seaquest, and Space Invaders. We used OpenAI's gym ATARI environments \citep{OpenAI} which are wrappers on the Arcade Learning Environment emulator \citep{Bellemare:2013:ALE}. These games have been used by other researchers investigating different learning methods 
\citep{DeepMind:Nature:2015,Bellemare:2013:ALE,Bellemare:2012:Contingency,Hausknecht:2014:HNeat-ATARI,Schulman:2015:TRPO-ATARI}, and, hence, they serve as benchmark environments for the evaluation of deep reinforcement learning algorithms. 

We used DeepMind's Deep Q-Network (DQN) architecture, described in \cite{DeepMind:Nature:2015}, as a function approximator for $Q(s,a;w)$. The same architecture was used to train agents to play the different ATARI games. The raw Atari frames, which are $210 \times 160$ pixel images with a $128$ color palette, were preprocessed by first converting their RGB representation to gray-scale and then down-sampling the images to be $110\times84$ pixels. The final input representation is obtained by cropping an $84 \times 84$ region of the image that roughly captures the playing area. The stack of the last 4 consecutive frames was used to produce the input, of size  $(4 \times 84 \times 84)$, to the $Q$-function. The first hidden layer of the network consisted of $32$ convolutional filters of size $8 \times 8$ with stride $4$, followed by a Rectified Linear Unit (ReLU) for nonlinearity. The second hidden layer consisted of $64$ convolutional filters of size $4 \times 4$ with stride 2, followed by a ReLU function. The third layer consisted of 512 fully-connected linear units, followed by ReLU. The output layer was a fully-connected linear layer with an output, $Q(s,a_i,w)$, for each valid joystick action, $a_i \in \mathcal{A}$. The number of valid joysticks actions, i.e. $|\mathcal{A}|$, was 9 for Beam-Rider, 4 for Breakout, 9 for Enduro, 6 for Q*Bert, 18 for Seaquest, and 6 for Space-Invaders.  

We only used 2 million ($2000\times 1024$) Q-learning training steps for training the network on each game (instead of $50$ million steps that was used originally in \cite{DeepMind:Nature:2015}). The training was stopped when the norm of the gradient, $\| g_k \|$, was less than a threshold. We used $\epsilon$-greedy for an exploration strategy, and, similar to  \cite{DeepMind:Nature:2015}, the exploration rate, $\epsilon$, was annealed linearly from $1$ to $0.1$. 

Every 10,000 steps, the performance of the learning algorithm was tested by freezing the Q-network's parameters. During the test time, we used $\epsilon=0.05$. The greedy action, $\max_a Q(s,a;w)$, was chosen by the Q-network $95\%$ of the times and there was $5\%$ randomness, similar to the DeepMind implementation in \cite{DeepMind:Nature:2015}.      

Inspired by \cite{DeepMind:Nature:2015}, we also used separate networks to compute the target values, $\Y = r + \gamma \max_{a'} Q(s',a',w_{k-1})$, which was essentially the network with parameters in previous iterate. After each iteration of the multi-batch line search L-BFGS, $w_k$ was updated to $w_{k+1}$, and the target network's parameter $w_{k-1}$ was updated to $w_k$.

Our optimization method was different than DeepMind's RMSProp method, used in \cite{DeepMind:Nature:2015} (which is a variant of SGD). We used a stochastic line search L-BFGS method as the optimization method (Algorithm \ref{Algo:DQN+L-BFGS}). There are a few important differences between our implementation of deep reinforcement learning and DeepMind's DQN algorithm. 

We used a quite large batch size, $b$, in comparison to \cite{DeepMind:Nature:2015}. We experimented with different batch sizes $b\in$ \{512, 1024, 2048, 4096, 8192\}. The experience memory, $\D$, had a capacity of $b$ also. We used one NVIDIA Tesla K40 GPU with 12GB GDDR5 RAM. The entire experience memory, $\D$, could fit in the GPU RAM with a batch size of $b \leq 8192$. 

After every $b$ steps of interaction with the environment, the optimization step in Algorithm \ref{Algo:DQN+L-BFGS} was ran. We used the entire experience memory, $\D$, for the overlap, $O_k$, between two consecutive samples, $J_k$ and $J_{k+1}$, to compute the gradient in \eqref{eq:overlap-gradient} as well as $\y_k$ in \eqref{eq:y-overlap}. Although the DQN algorithm used a smaller batch size of $32$, the frequency of optimization steps was high (every $4$ steps). We hypothesize that using the smaller batch size made the computation of the gradient too noisy, and, also, this approach doesn't save significant computational time, since the overhead of data transfer between GPU and CPU is more costly than the computation of the gradient over a bigger batch size, due to the power of parallelism in a GPU. Once the overlap gradient, $g_k^{(O_k)}$, was computed, we computed the gradient, $g_k^{(J_k)}$, for the current sample, $J_k$, in \eqref{eq:gradient-Jk} by memorizing and using the gradient information from the previous optimization step. Then, the L-BFGS two loop-recursion in Algorithm \ref{Algo:L-BFGS-two-loop-recursion} was used to compute the search direction $p_k = - H_k g_k^{(J_k)}$.

After finding the quasi-Newton decent direction, $p_k$, the Wolfe Condition \eqref{eqn:Wolfe-Conditions} was applied to compute the step size, $\alpha_k \in [0.1,1]$, by satisfying the sufficient decrease and the curvature conditions \citep{Wolfe1969,Nocedal-Wright:2006:Numerical-Optimization-Book}. During the optimization steps, either the step size of $\alpha_k=1$ satisfied the Wolfe conditions, or the line search algorithm iteratively used smaller $\alpha_k$ until it satisfied the Wolfe conditions or reached a lower bound of $0.1$. The original DQN algorithm used a small fixed learning rate of $0.00025$ to avoid the execrable drawback of the noisy stochastic gradient decent step, which makes the learning process very slow.  

The vectors $\s_k = w_{k+1} - w_k$ and $\y_k = g^{(O_k)}_{k+1} - g^{(O_k)}_{k}$ were only added to the recent collections $S_k$ and $Y_k$  if $\s_k^T \y_k > 0$ and not close to zero. We applied this condition  to \emph{cautiously} preserve the positive definiteness of the L-BFGS matrices $B_k$. Only the $m$ recent $\{(\s_i,\y_i)\}$ pairs were stored into $S_k$ and $Y_k$ ($|S_k| = m$ and $|Y_k|=m$) and the older pairs were removed from the collections. We experimented with different L-BFGS memory sizes $m \in \{20,40,80\}$. 

All code is implemented in the Python language using Pytorch, NumPy, and SciPy libraries, and it is available at \texttt{http://rafati.net/quasi-newton-rl}. 

\subsection{Results and Discussions}
\label{sec:discussions}
The average of the maximum game scores is reported in Figure \ref{fig:train-mean-std} (a). The error bar in Figure \ref{fig:train-mean-std} (a) is the standard deviation for the simulations with different batch size, $b \in \{512,1024,2048,4096\}$, and different L-BFGS memory size, $m \in \{20,40,80\}$, for each ATARI game (total of 12 simulations per each task). All simulations regardless of the batch size, $b$, and the memory size, $m$, exhibited robust learning. The average training time for each task, along with the empirical loss values, $\L(w_k)$, is shown in Figure \ref{fig:train-mean-std} (b). 

The Coefficient of Variation (C.V.) for the test scores was about $10\%$ for each ATARI task. (The coefficient of variation is defined as the standard deviation divided by the mean). We did not find a correlation between the test scores and the different batch sizes, $b$, or the different L-BFGS memory sizes, $m$. The coefficient of variation for the training times was about $50\%$ for each ATARI task. Hence, we did not find a strong correlation between the training time and the different batch sizes, $b$, or the different L-BFGS memory sizes, $m$. In most of the simulations, the loss for the training time, as shown in Figure \ref{fig:train-mean-std} (b), was very small.
\begin{figure}[hbt!]
	\centering
	\begin{tabular}[t]{cc}
		(a)&(b)\\
		\includegraphics[width=.44\textwidth]{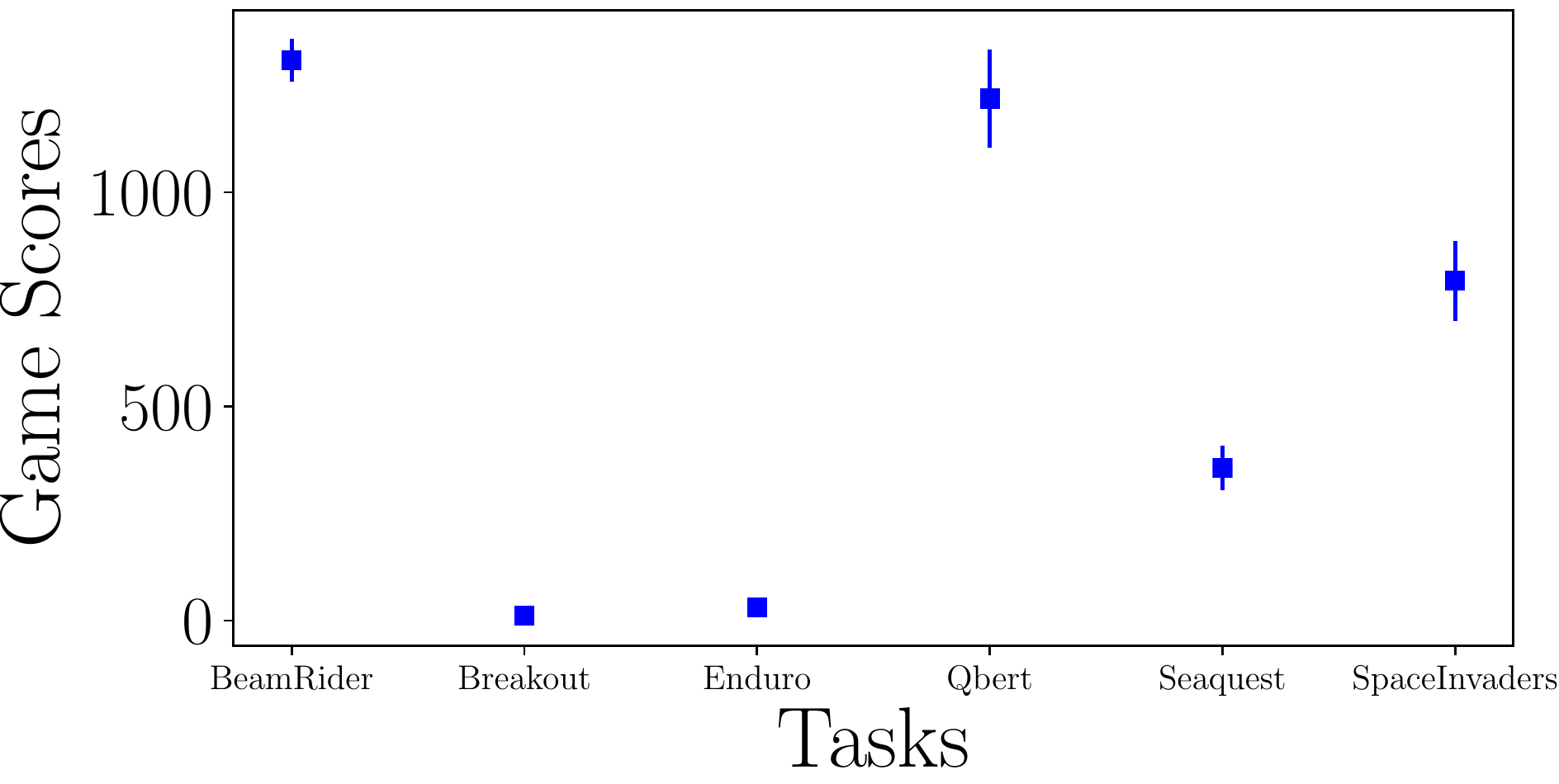}
		&
		\includegraphics[width=.44\textwidth]{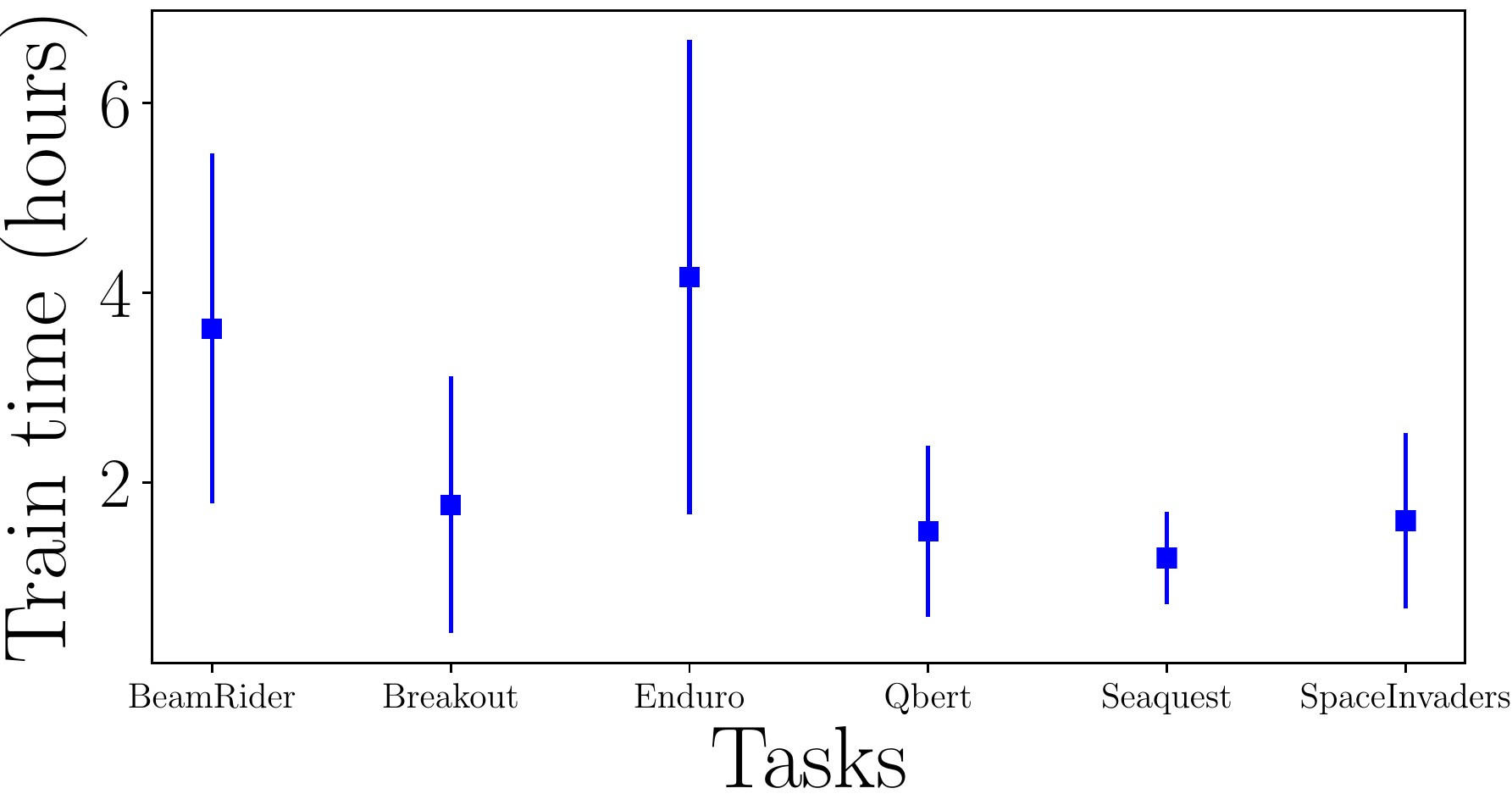}	
	\end{tabular}
	\caption{(a) Test scores (b) Total training time for ATARI games.}
	\label{fig:train-mean-std}
\end{figure}

The test scores and the training loss, $\L_k$, for the six ATARI 2600 environments is shown in Figure \ref{fig:score-time-ATARI-Games} using the batch size of $b=2048$ and L-BFGS memory size $m=40$.

\begin{figure*}[hbt!]
	\centering
	\begin{tabular}{ccc} 
		(a) & (b) & (c) \\
		\includegraphics[width=.3\textwidth]{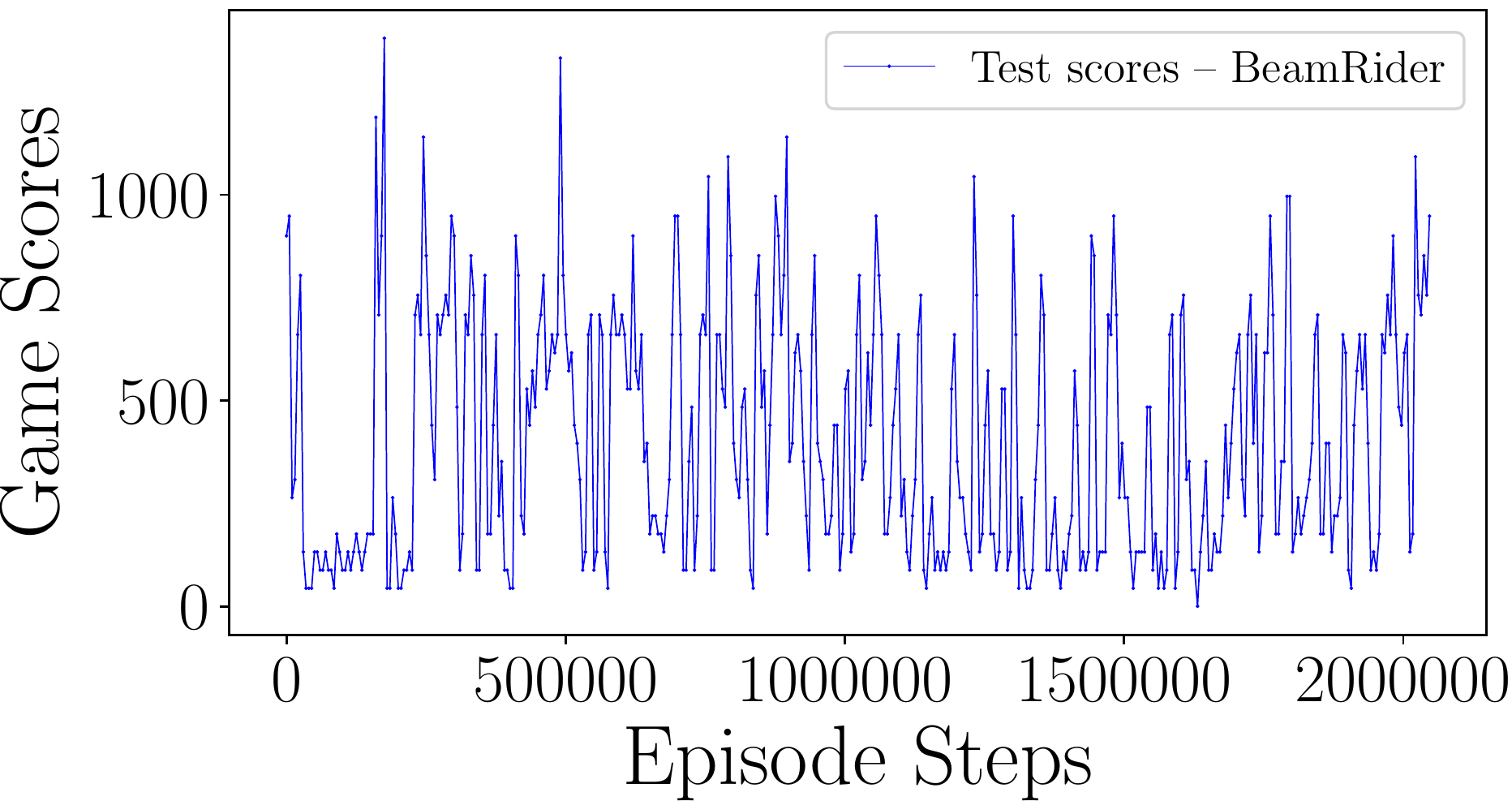} & 
		\includegraphics[width=.3\textwidth]{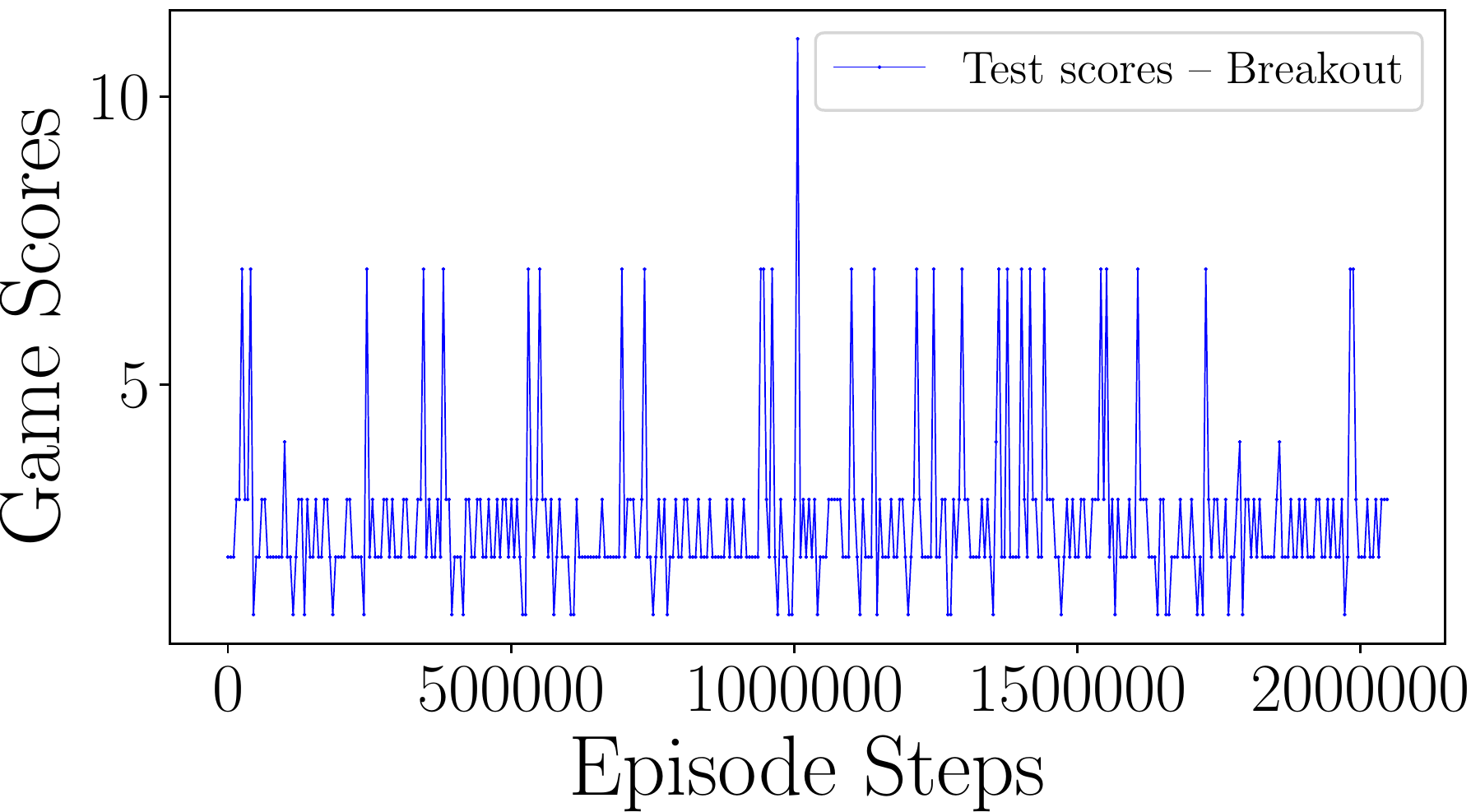} & 	\includegraphics[width=.3\textwidth]{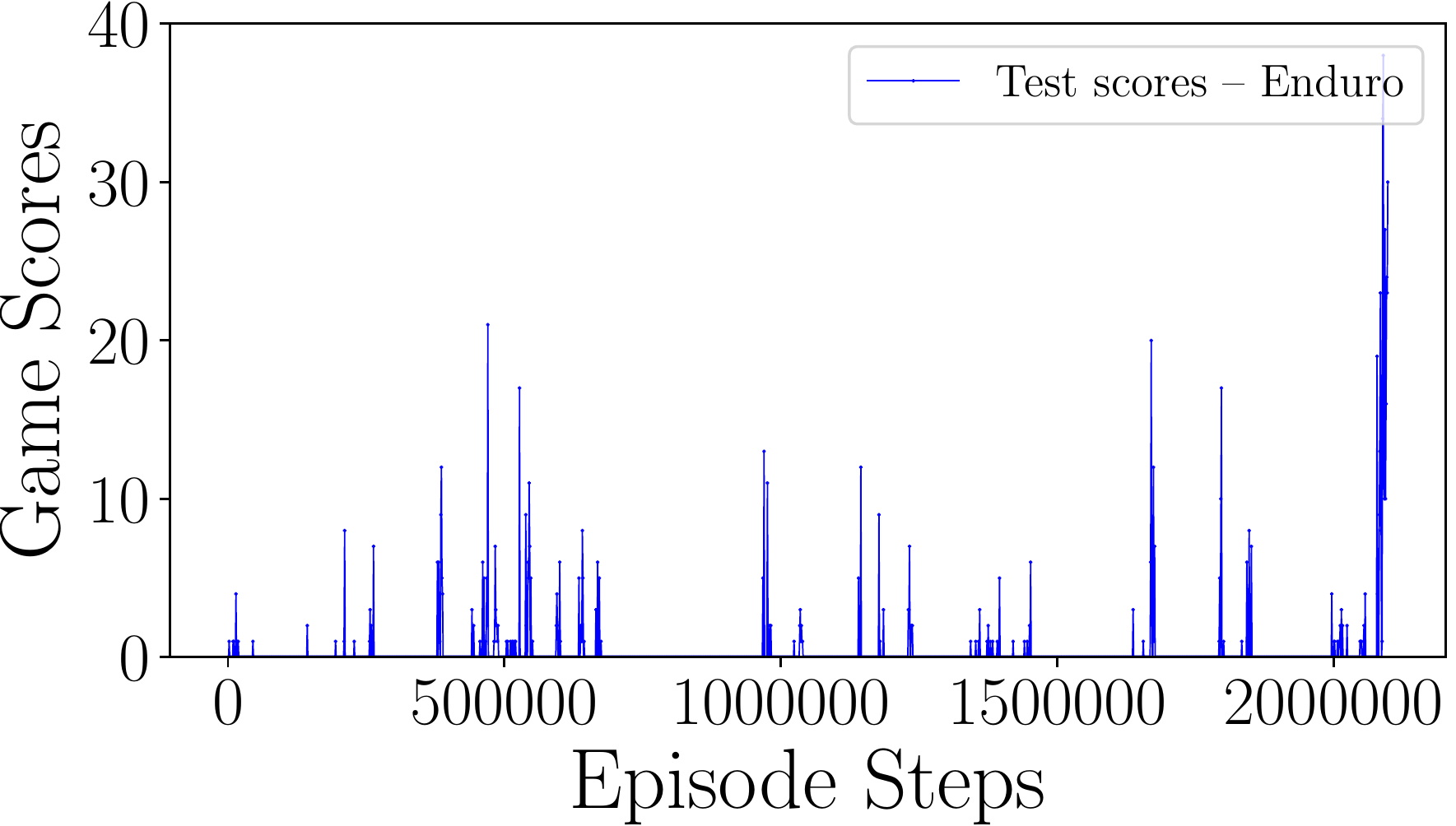} \\
		(d) & (e) & (f) \\
		\includegraphics[width=.3\textwidth]{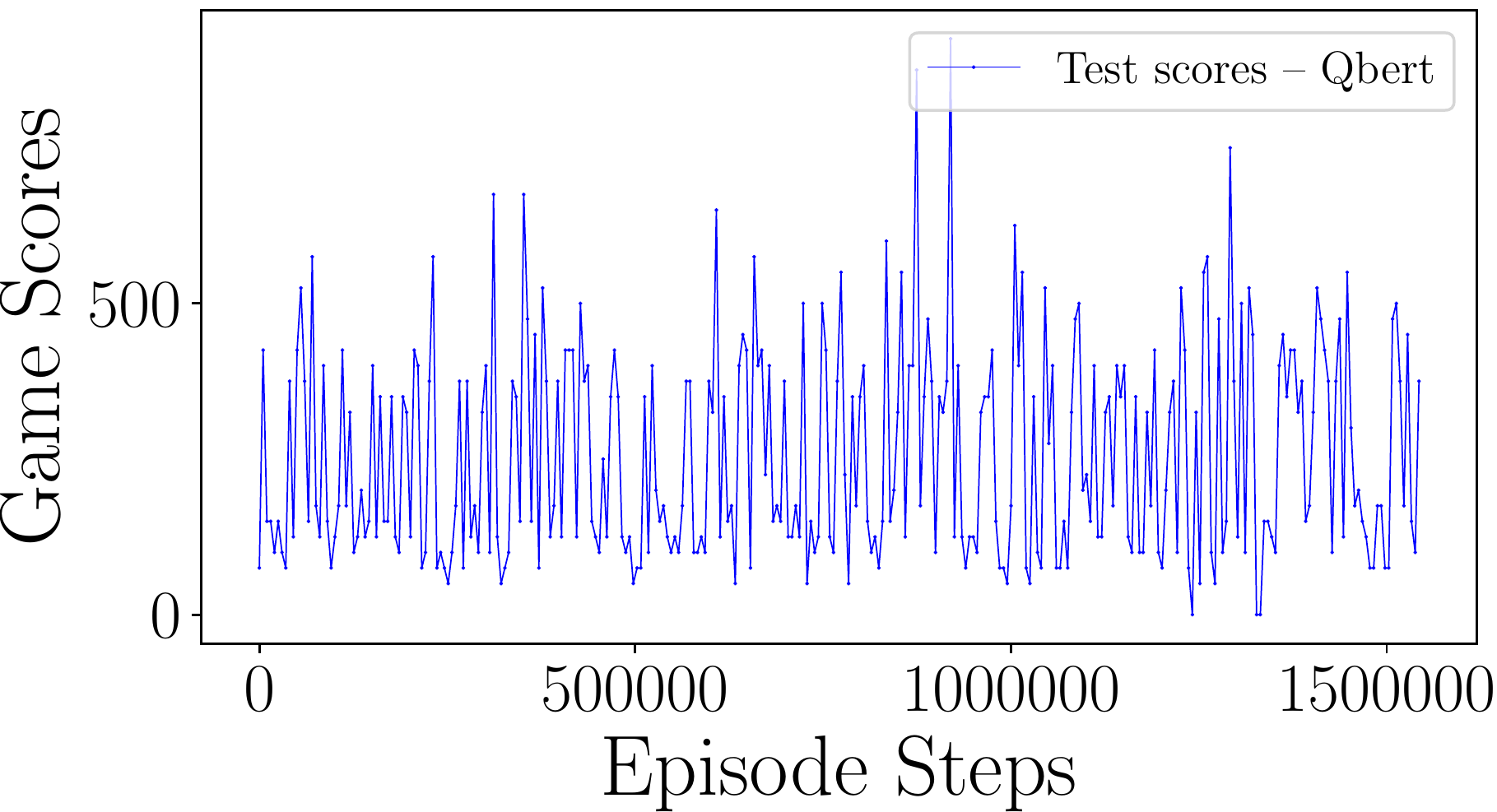} & 
		\includegraphics[width=.3\textwidth]{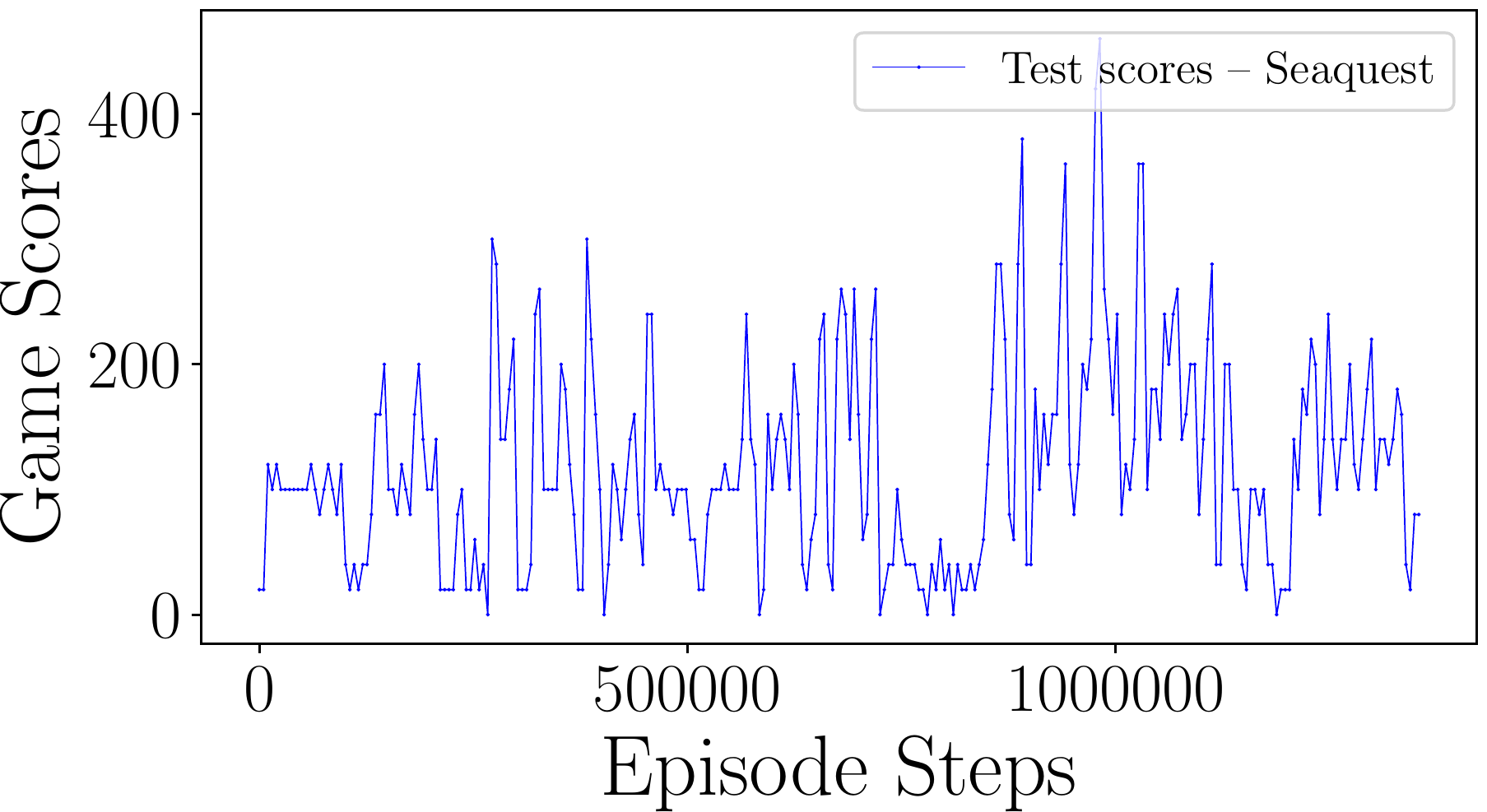} & 	\includegraphics[width=.3\textwidth]{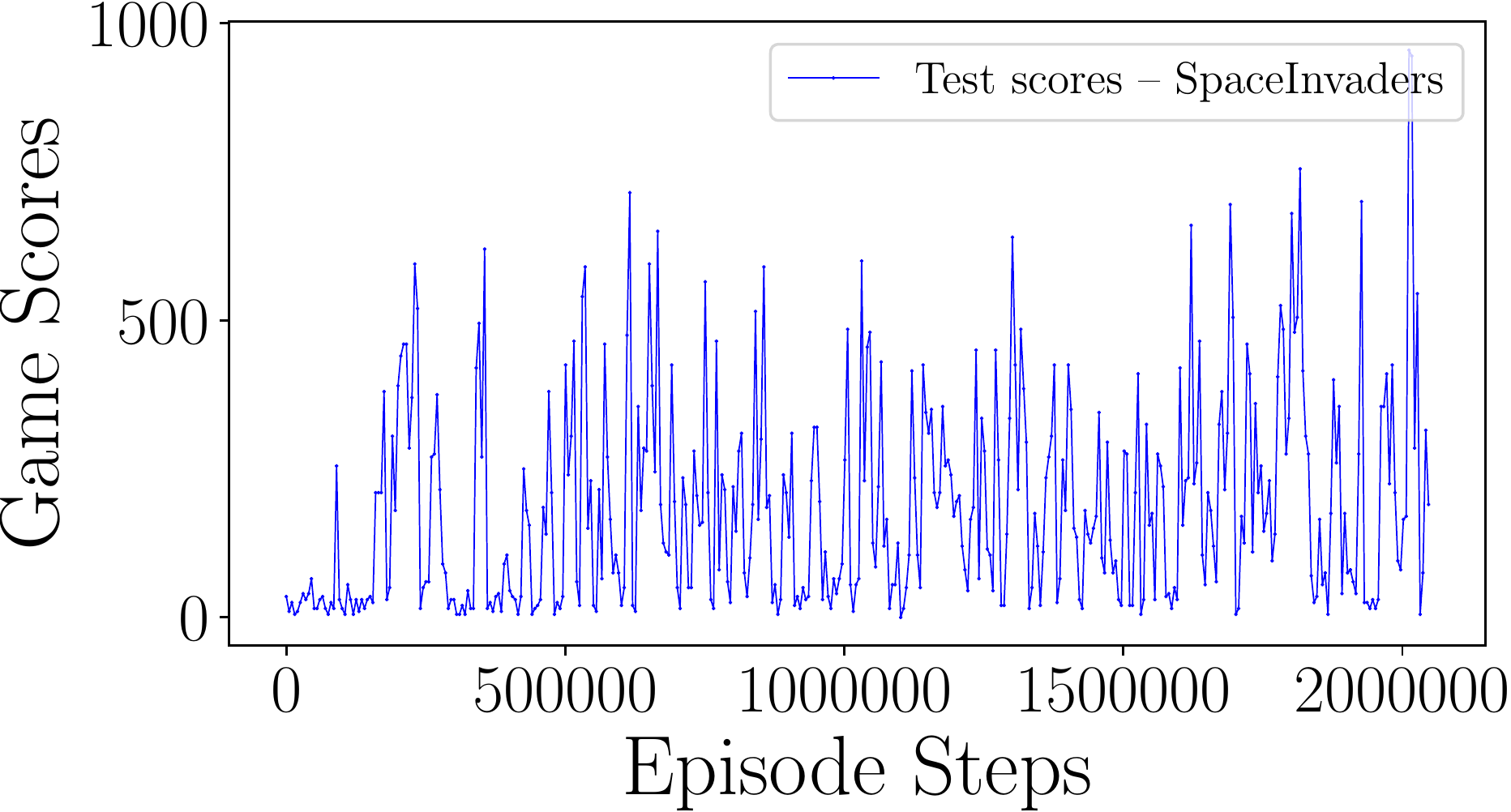} \\
		(g) & (h) & (i) \\ 
		\includegraphics[width=.3\textwidth]{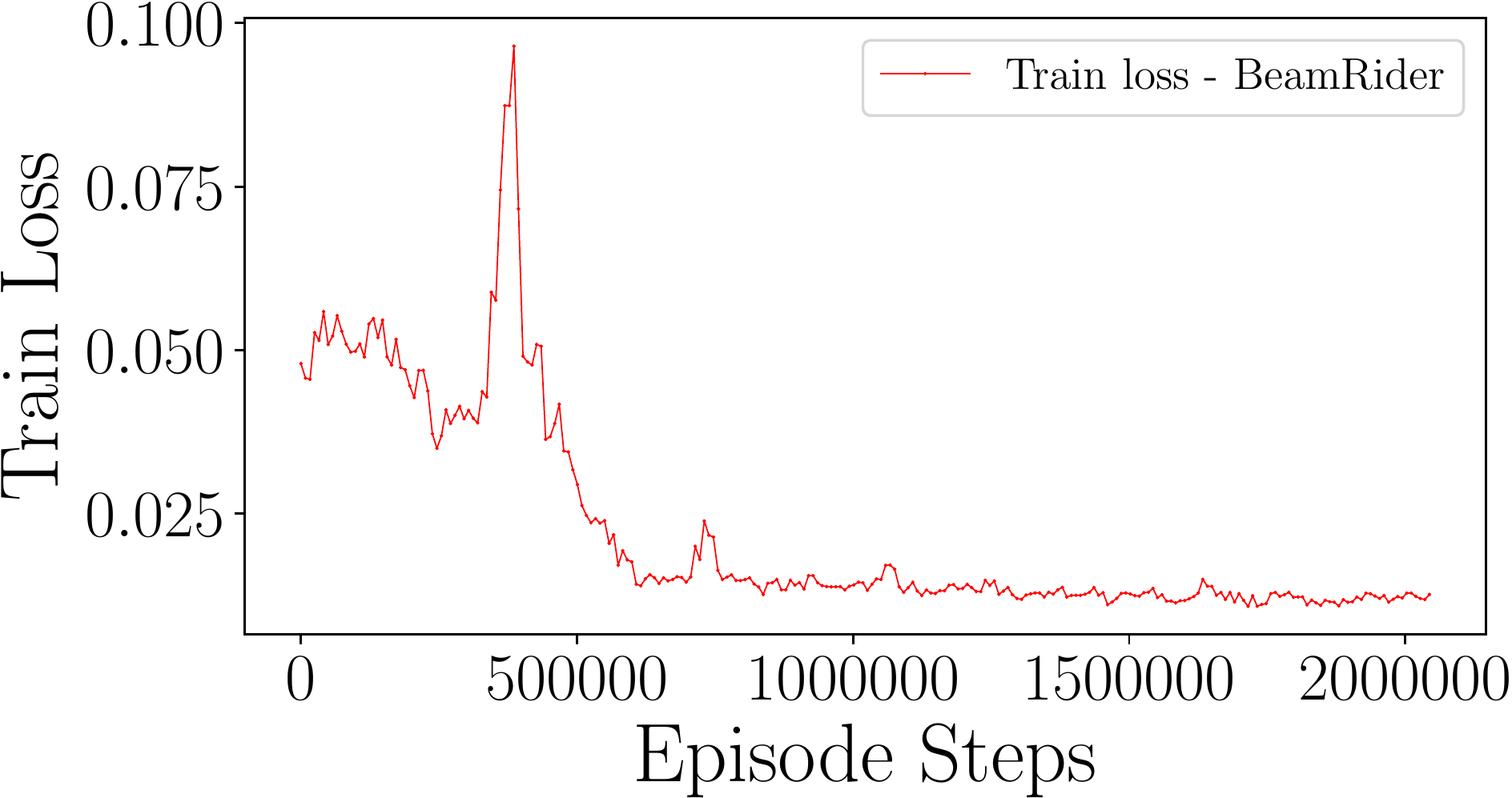} & 
		\includegraphics[width=.3\textwidth]{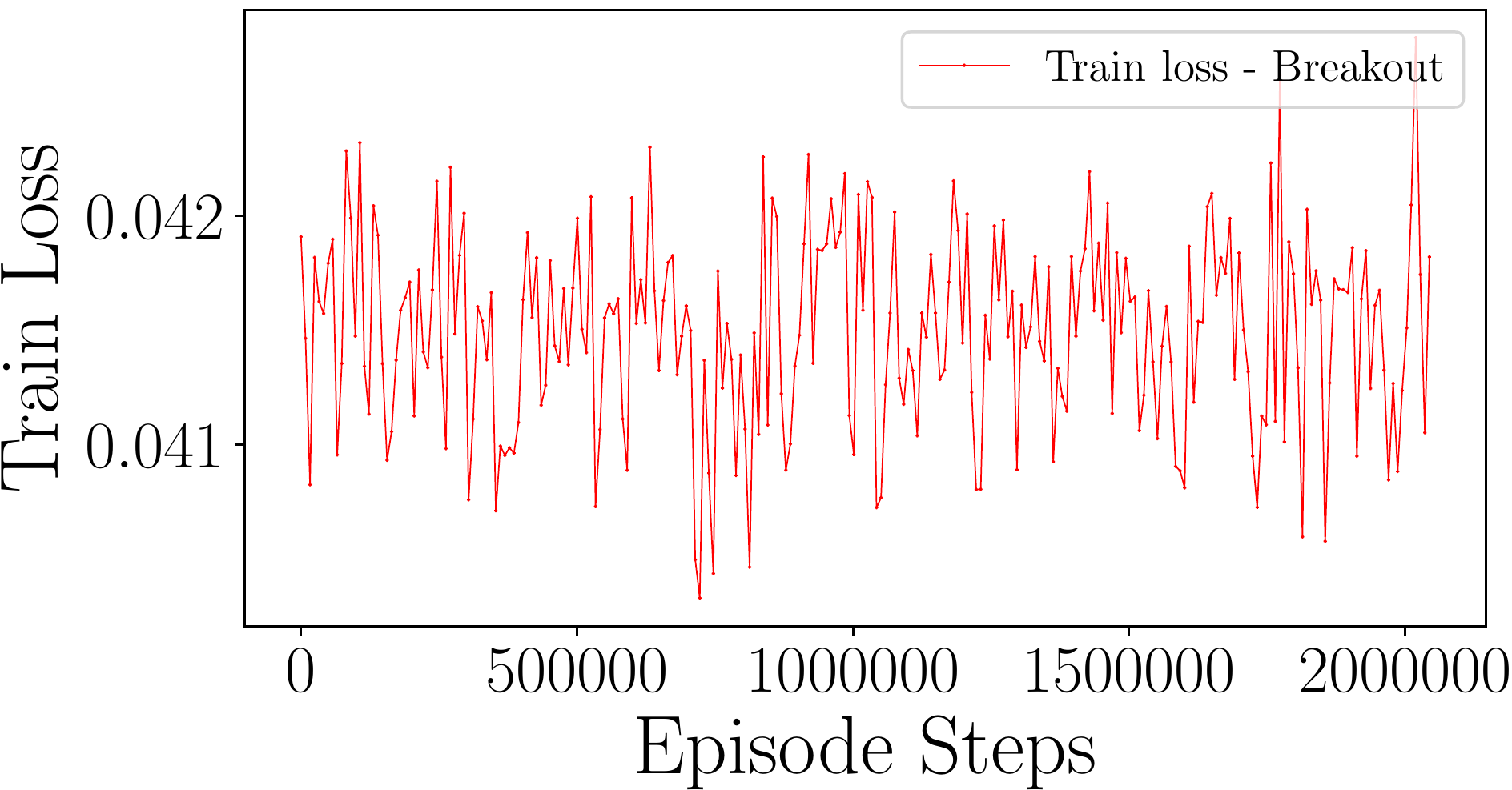} & 	\includegraphics[width=.3\textwidth]{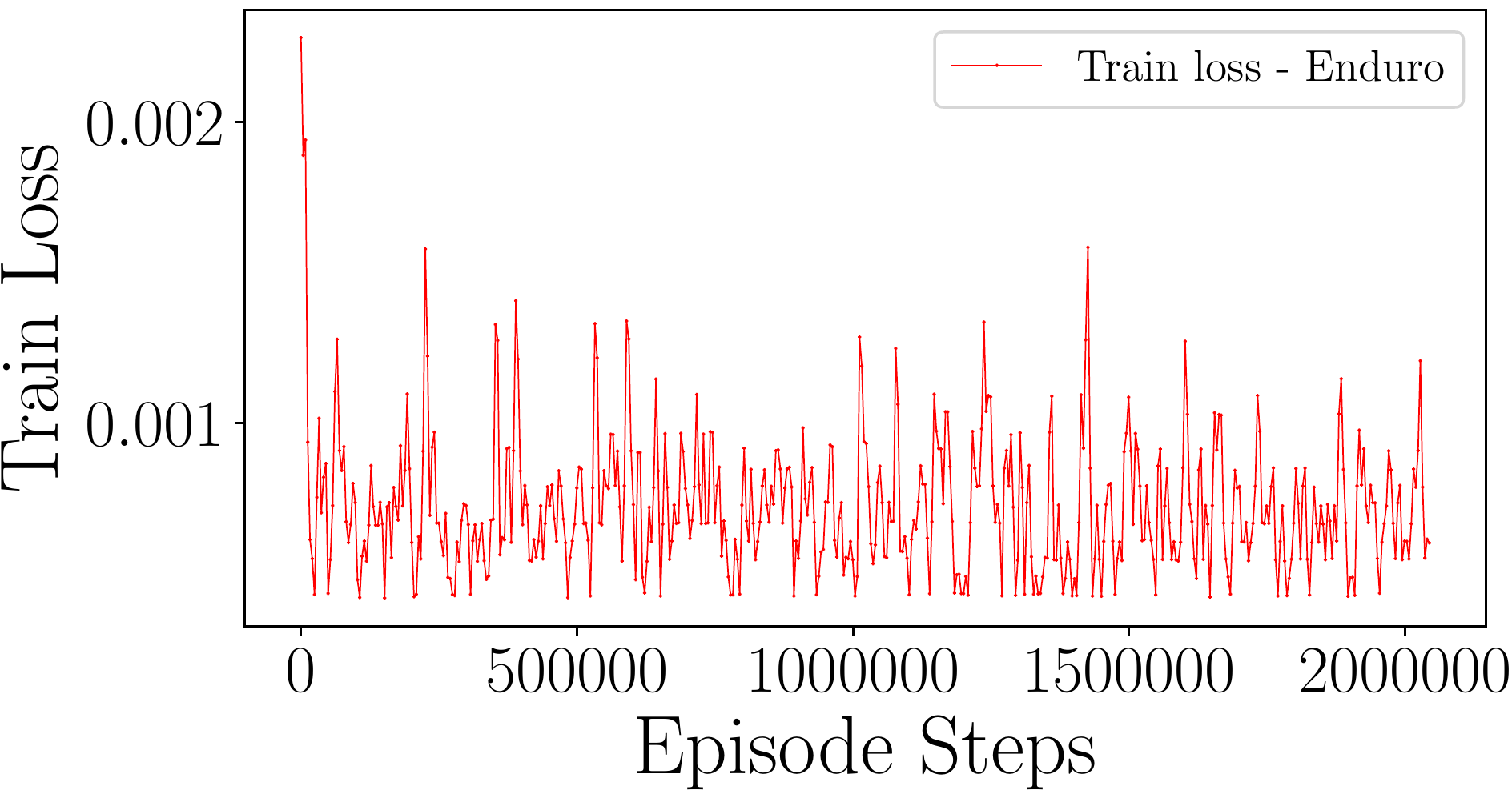} \\
		(j) & (k) & (l)\\
		\includegraphics[width=.3\textwidth]{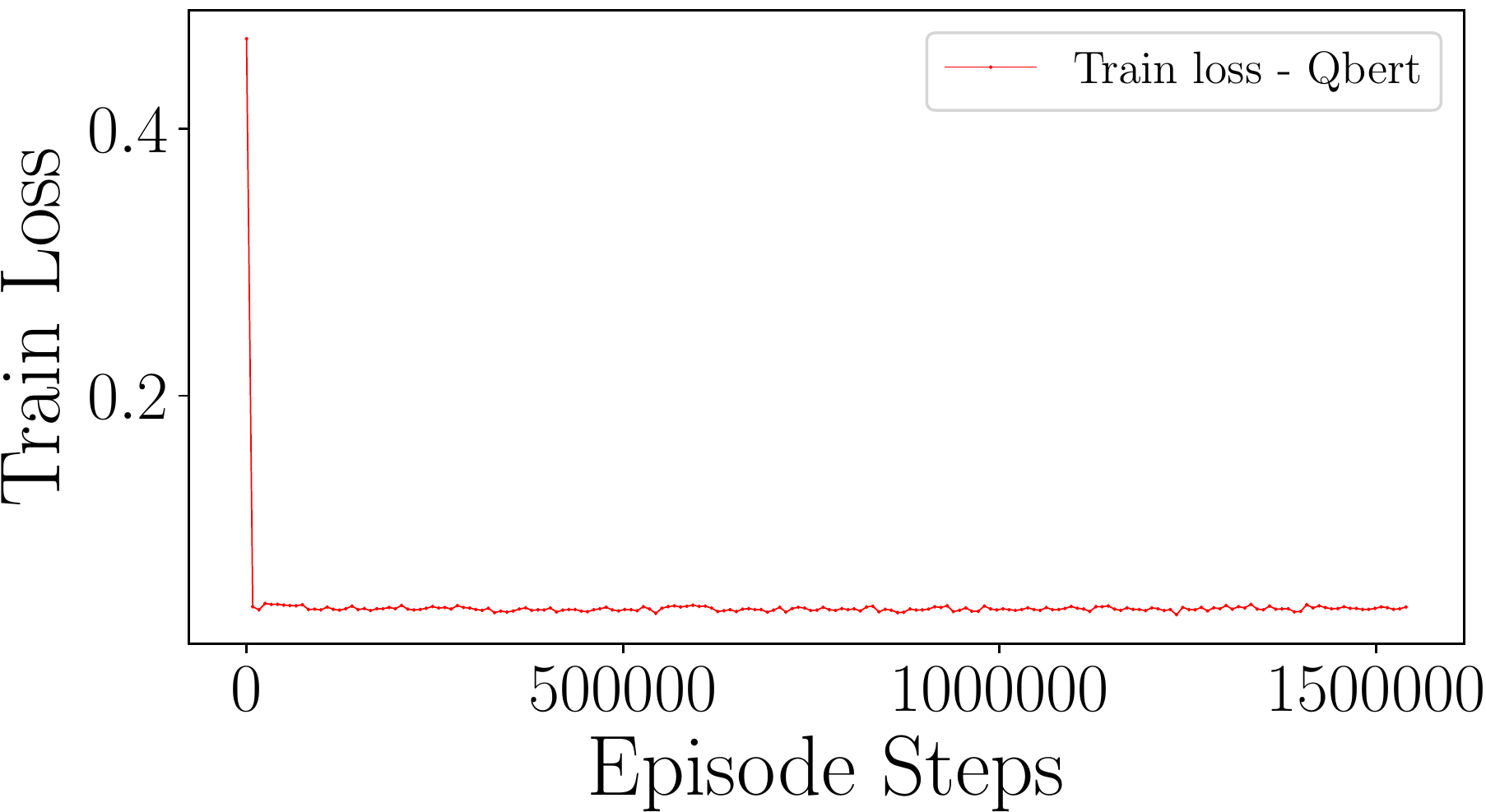} & 
		\includegraphics[width=.3\textwidth]{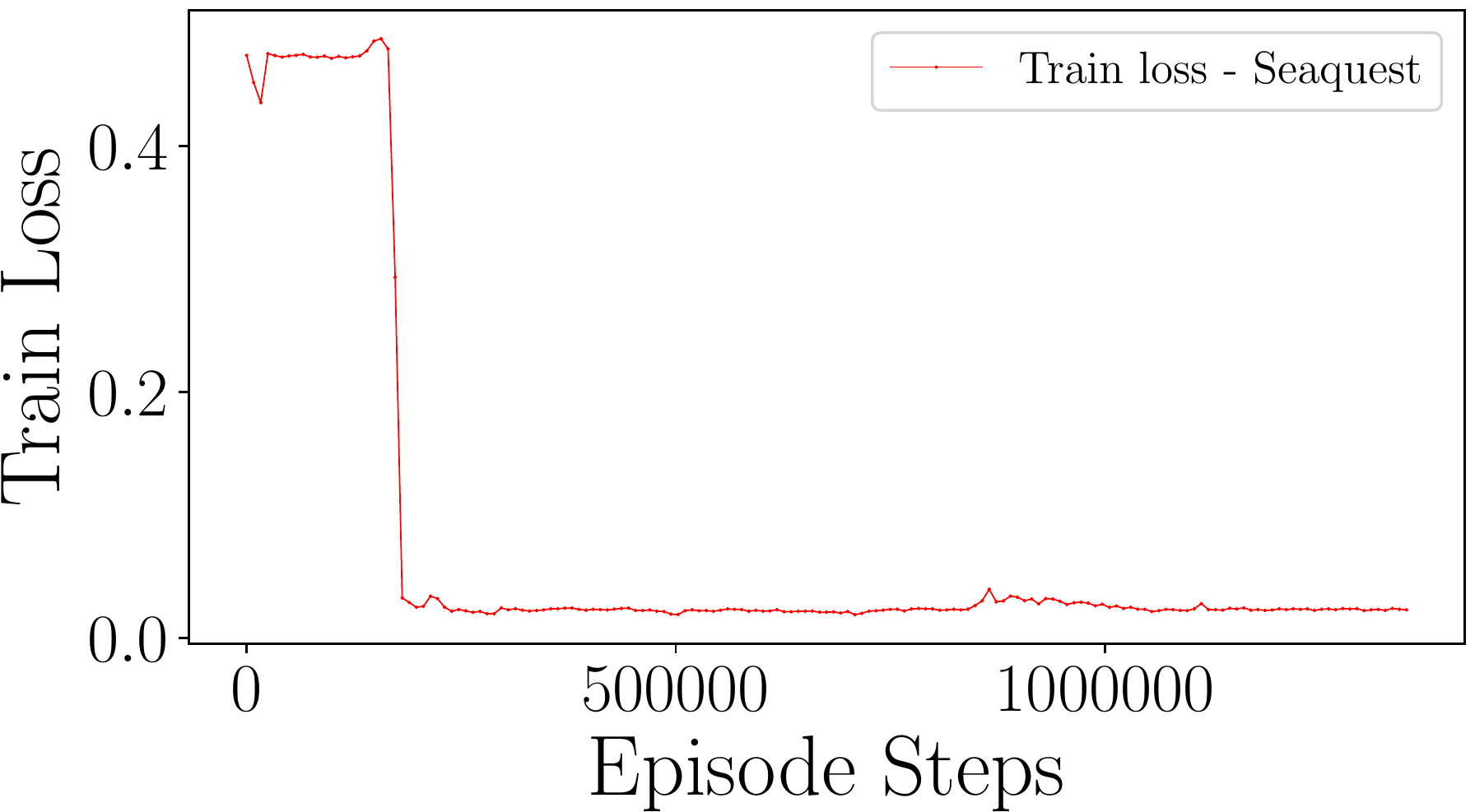} & 	\includegraphics[width=.3\textwidth]{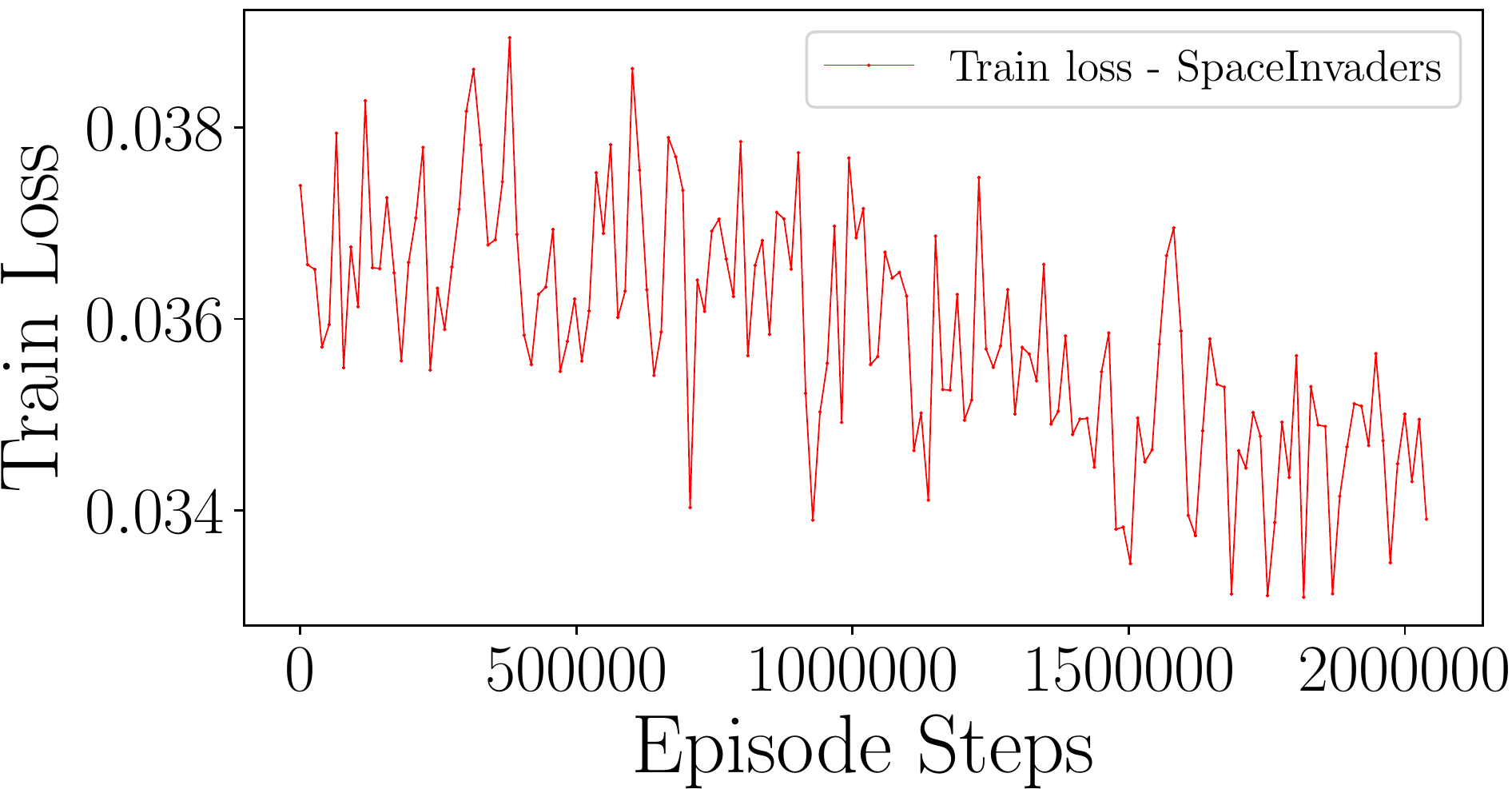} 
	\end{tabular}
	\caption{(a) -- (f) Test scores and (g) -- (l) training loss for six ATARI games --- Beam Rider, Breakout, Enduro, Q*bert, Seaquest, and Space Invaders. The results are form simulations with batch size $b=2048$ and the L-BFGS memory size $m=40$.}
	\label{fig:score-time-ATARI-Games}
\end{figure*}

The results of the Deep L-BFGS Q-Learning algorithm is summarized in Table \ref{t:summary}, which also includes an expert human performance and some recent model-free methods: the Sarsa algorithm \citep{Bellemare:2013:ALE}, the \emph{contingency aware} method from \citep{Bellemare:2012:Contingency}, deep Q-learning \citep{DeepMind:Atari:2013}, and two methods based on policy optimization called Trust Region Policy Optimization (TRPO vine and TRPO single path) \citep{Schulman:2015:TRPO-ATARI} and the Q-learning with the SGD method. Our method outperformed most other methods in the Space Invaders game. Our deep L-BFGS Q-learning method consistently achieved reasonable scores in the other games. Our simulations were only trained on about 2 million Q-learning steps (much less than other methods). DeepMind DQN method outperformed our algorithm on most of the games, except on the Space-Invaders game. 
\begin{table}[hbt!]
	\small
	\centering
	\caption{Best Game Scores for ATARI 2600 Games with different learning methods. Beam Rider, Breakout, Enduro, Q*bert, Seaquest, and Space Invaders.}
	\begin{tabular}[t]{c|cccccc} 
		\hline
		\textbf{Method} &  \textbf{Beam-Rider} & \textbf{Breakout} &  \textbf{Enduro} & \textbf{Q*bert} & \textbf{Seaquest} & \textbf{Space-Invaders} \\ \hline 
		Random & 354 & 1.2 &0 &157 & 110 & 179 \\ \hline 
		Human & 7456 & 31 & 368 & 18900 & 28010 & 3690  \\ \hline 
		Sarsa
		\citep{Bellemare:2013:ALE}&
		996 & 5.2 &
		129 &
		614 &
		665 &
		271 \\ \hline 
		Contingency \citep{Bellemare:2012:Contingency} &
		1743 & 6 &
		159 &
		960 &
		723
		& 268	\\ \hline
		HNeat Pixel \citep{Hausknecht:2014:HNeat-ATARI}&
		1332 & 4
		& 91
		& 1325 &
		800 &
		1145 \\ \hline
		DQN \citep{DeepMind:Atari:2013} & 4092 &168 & 470 & 1952 &1705 & 581 \\ \hline  
		TRPO, Single path \citep{Schulman:2015:TRPO-ATARI} & 1425 & 10 & 534 & 1973 & 1908 &568 \\ \hline 
		TRPO, Vine \citep{Schulman:2015:TRPO-ATARI} & 859 & 34 & 431 & 7732 & 7788 & 450\\ \hline
		SGD ($\alpha = 0.01$) & 804 &13 & 2 & 1325 & 420 & 735 \\ \hline 
		SGD ($\alpha = 0.00001$) & 1092 &14 & 1 & 1300 & 380 & 975 \\ \hline 
		Our method & 1380 & 18 & 49 &1525 & 600 & 955 \\\hline
	\end{tabular}
	\label{t:summary}
\end{table}

The training time for our simulations were on the order of 3 hours (4 hours for Beam-Rider, 2 hours for Breakout, 4 hours for Enduro, 2 hours for Q*bert, 1 hour for Seaquest, and 2 hours for Space-Invaders). Our method outperformed all other methods on the computational time. For example, 500 iterations of the TRPO  algorithm took about 30 hours \citep{Schulman:2015:TRPO-ATARI}. We also compared our method with the SGD method. For each task, we trained the Q-learning algorithm using the SGD optimization method for two million Q-learning training steps. We examined two different learning rates: a relatively large learning rate, $\alpha=0.01$, and a very small learning rate, $\alpha=0.00001$. The other parameters were adopted from \cite{DeepMind:Nature:2015}. The game scores with our method outperformed the SGD method in most of the simulations (11 out of 12 times) (see Table \ref{t:summary}). Although the computation time per iteration of the SGD update is lower than our method, but the total training time of the SGD method is much slower than our method due to the higher frequency of the parameter updates in the SGD method as opposed to our L-BFGS line-search method. See Table \ref{t:time} for the results of the training time for each task, using different optimization methods, L-BFGS and SGD. 
\begin{table}[t!]
	\small
	\centering
	\caption{Average training time for ATARI 2600 games with different learning methods (in hours). Beam Rider, Breakout, Enduro, Q*bert, Seaquest, and Space Invaders.}
	\begin{tabular}[t]{c|cccccc} 
		\hline
		\textbf{Method} &  \textbf{Beam-Rider} & \textbf{Breakout} &  \textbf{Enduro} & \textbf{Q*bert} & \textbf{Seaquest} & \textbf{Space-Invaders} \\ \hline 
		SGD ($\alpha=0.01$) & 4 & 2 & 8 & 8 & 8 & 1 \\ \hline
		SGD ($\alpha=0.00001$) & 7 & 11 & 7 & 6 & 8 & 6 \\ \hline  
		Our method & 4 & 2 & 4 & 2 & 1 & 1 \\\hline
	\end{tabular}
	\label{t:time}
\end{table}

\section{Conclusions}
\label{sec:conclusion}
In this chapter, we implemented an optimization method based on the limited-memory quasi-Newton method known as L-BFGS as an alternative to the gradient descent methods typically used to train deep neural networks. We considered both line-search and trust-region frameworks. The contribution of this research is an algorithm known as TRMinATR which minimizes the cost function of the neural network by efficiently solving a sequence of trust-region subproblems using low-rank updates to Hessian approximations. The benefit of the method is that the algorithm is free from the constraints of data specific parameters seen in traditionally used methods. TRMinATR also improves on the computational efficiency of a similar line search implementation.
Furthermore, we proposed and implemented a novel optimization method based on line search limited-memory BFGS for the deep reinforcement learning framework. We tested our method on six classic ATARI 2600 games. The L-BFGS method attempts to approximate the Hessian matrix by constructing  positive definite matrices with low-rank updates. Due to the nonconvex and nonlinear loss functions arising in deep reinforcement learning, our numerical experiments show that using the curvature information in computing the search direction leads to a more robust convergence when compared to the SGD results. Our proposed deep L-BFGS Q-Learning method is designed to be efficient for parallel computations on GPUs. Our method is much faster than the existing methods in the literature, and it is memory efficient since it does not need to store a large experience replay memory.
Since our proposed limited-memory quasi-Newton optimization methods rely only on first-order gradients, they can be efficiently scaled and employed for larger scale supervised learning, unsupervised learning, and reinforcement learning applications. The overall enhanced performance of our proposed optimization methods on deep learning applications can be attributed to the robust convergence properties, fast training time, and better generalization characteristics of these optimization methods.

\bibliography{draft.bib}

\begin{thebibliography}{10}

\bibitem{Goodfellow-et-al:2016:Deep-Learning-Book}
I.~Goodfellow, Y.~Bengio, and A.~Courville.
\newblock {\em Deep Learning}.
\newblock MIT Press, 2016.

\bibitem{Wani2019}
S.~Afzal M.~A.~Wani, F. A.~Bhat and A.~Khan.
\newblock {\em Advances in Deep Learning}.
\newblock Springer, 2019.

\bibitem{Hastie:2009:Elements-of-statistical-learning-book}
T.~Hastie, R.~Tibshirani, and J.~Friedman.
\newblock {\em The elements of statistical learning: data mining, inference and
  prediction}.
\newblock Springer, 2 edition, 2009.

\bibitem{Robbins:1951:SGD}
H.~Robbins and S.~Monro.
\newblock A stochastic approximation method.
\newblock {\em The Annals of Mathematical Statistics}, 22(3):400--407, 1951.

\bibitem{bottou2010large}
L.~Bottou.
\newblock Large-scale machine learning with stochastic gradient descent.
\newblock In {\em Proceedings of COMPSTAT'2010}, pages 177--186. Springer,
  2010.

\bibitem{Duchi2011}
J.C. Duchi, E.~Hazan, and Y.~Singer.
\newblock Adaptive subgradient methods for online learning and stochastic
  optimization.
\newblock {\em Journal of Machine Learning Research}, 12:2121--2159, 2011.

\bibitem{recht2011hogwild}
B.~Recht, C.~Re, S.~Wright, and F.~Niu.
\newblock Hogwild: A lock-free approach to parallelizing stochastic gradient
  descent.
\newblock In {\em Advances in neural information processing systems}, pages
  693--701, 2011.

\bibitem{adhikari2017limited}
L.~Adhikari, O.~DeGuchy, J.B. Erway, S.~Lockhart, and R.F. Marcia.
\newblock Limited-memory trust-region methods for sparse relaxation.
\newblock In {\em Wavelets and Sparsity XVII}, volume 10394. International
  Society for Optics and Photonics, 2017.

\bibitem{Le:2011:BFGS-and-CG}
Q.~V. Le, J.~Ngiam, A.~Coates, A.~Lahiri, B.~Prochnow, and A.~Y. Ng.
\newblock On optimization methods for deep learning.
\newblock In {\em Proceedings of the 28th International Conference on
  International Conference on Machine Learning}, pages 265--272, 2011.

\bibitem{Erway-etal-2018-arXiv}
J.~B. Erway, J.~Griffin, R.~F. Marcia, and R.~Omheni.
\newblock Trust-region algorithms for training responses: machine learning
  methods using indefinite {H}essian approximations.
\newblock {\em Optimization Methods and Software}, 0(0):1--28, 2019.

\bibitem{Xu:2017:empirical}
P.~Xu, F.~Roosta-Khorasan, and M.W. Mahoney.
\newblock Second-order optimization for non-convex machine learning: An
  empirical study.
\newblock {\em ArXiv e-prints}, 2017.

\bibitem{Ma10}
J.~Martens.
\newblock Deep learning via {Hessian-free} optimization.
\newblock In {\em Proceedings of the 27th International Conference on Machine
  Learning (ICML)}, pages 735--742, 2010.

\bibitem{Ma11}
J.~Martens and I.~Sutskever.
\newblock Learning recurrent neural networks with hessian-free optimization.
\newblock In {\em Proceedings of the 28th International Conference of on
  Machine Learning (ICML)}, pages 1033--1040, 2011.

\bibitem{Ma12}
J.~Martens and I.~Sutskever.
\newblock Training deep and recurrent networks with hessian-free optimization.
\newblock In {\em Neural Networks: Tricks of the Trade}, pages 479--535.
  Springer, 2012.

\bibitem{Bollapragada:2016:Hessian-Free}
R.~Bollapragada, R.~Byrd, and J.~Nocedal.
\newblock Exact and inexact subsampled non methods for optimization.
\newblock {\em ArXiv e-prints}, 2016.

\bibitem{Zeiler2012}
M.~D. Zeiler.
\newblock {ADADELTA:} an adaptive learning rate method.
\newblock {\em ArXiv e-prints (arxiv:1212.5701)}, 2012.

\bibitem{kingma2014adam}
D.~P. Kingma and J.~Ba.
\newblock {Adam: A method for stochastic optimization}.
\newblock {\em arXiv preprint arXiv:1412.6980}, 2014.

\bibitem{Nocedal-Wright:2006:Numerical-Optimization-Book}
J.~Nocedal and S.~J. Wright.
\newblock {\em Numerical Optimization}.
\newblock Springer, New York, 2nd edition, 2006.

\bibitem{Brust-etal:2017:DenseInit}
J.~Brust, O.~Burdakov, J.~B. Erway, and R.~F. Marcia.
\newblock Dense initializations for limited-memory quasi-newton methods.
\newblock {\em ArXiv e-prints (arxiv:1710.02396)}, 2017.

\bibitem{brust2017solving}
J.~Brust, J.~B. Erway, and R.~F. Marcia.
\newblock On solving {L-SR1} trust-region subproblems.
\newblock {\em Computational Optimization and Applications}, 66(2):245--266,
  2017.

\bibitem{Bro70}
C.~G. Broyden.
\newblock The convergence of a class of double-rank minimization algorithms 1.
  general considerations.
\newblock {\em SIAM Journal of Applied Mathematics}, 6(1):76--90, 1970.

\bibitem{Fle70}
R.~Fletcher.
\newblock A new approach to variable metric algorithms.
\newblock {\em The Computer Journal}, 13(3):317--322, 1970.

\bibitem{Gol70}
D.~Goldfarb.
\newblock A family of variable-metric methods derived by variational means.
\newblock {\em Mathematics of computation}, 24(109):23--26, 1970.

\bibitem{Sha70}
D.~F. Shanno.
\newblock Conditioning of quasi-{Newton} methods for function minimization.
\newblock {\em Mathematics of computation}, 24(111):647--656, 1970.

\bibitem{le2011optimization}
Q.V. Le, J.~Ngiam, A.~Coates, A.~Lahiri, B.~Prochnow, and A.Y. Ng.
\newblock {On optimization methods for deep learning}.
\newblock In {\em Proceedings of the 28th International Conference on
  International Conference on Machine Learning}, pages 265--272. Omnipress,
  2011.

\bibitem{Rafati-et-al:2018:EUSIPCO}
J.~Rafati, O.~DeGuchy, and R.~F. Marcia.
\newblock Trust-region minimization algorithm for training responses
  {(TRMinATR)}: The rise of machine learning techniques.
\newblock In {\em 26th European Signal Processing Conference (EUSIPCO 2018),
  Rome, Italy}, 2018.

\bibitem{BurdakovLMTR16}
O.~Burdakov, L.~Gong, Y.X. Yuan, and S.~Zikrin.
\newblock On efficiently combining limited memory and trust-region techniques.
\newblock {\em Mathematical Programming Computation}, 9:101--134, 2016.

\bibitem{RL-Book:Sutton:Barto:2017}
R.~S. Sutton and A.~G. Barto.
\newblock {\em Reinforcement Learning: An Introduction}.
\newblock MIT Press, Cambridge, MA, USA, 2nd edition, 2018.

\bibitem{SuttonRS:1996:Coarse}
Richard~S Sutton.
\newblock Generalization in reinforcement learning: Successful examples using
  sparse coarse coding.
\newblock In {\em Advances in Neural Information Processing Systems 8}, pages
  1038--1044, 1996.

\bibitem{Rafati-Noelle:2015:CSC}
J.~Rafati and D.C. Noelle.
\newblock Lateral inhibition overcomes limits of temporal difference learning.
\newblock In {\em Proceedings of the 37th Annual Cognitive Science Society
  Meeting, Pasadena, CA, USA}, 2015.

\bibitem{Rafati-Noelle:2017:CCCN}
J.~Rafati and D.C. Noelle.
\newblock Sparse coding of learned state representations in reinforcement
  learning.
\newblock In {\em Conference on Cognitive Computational Neuroscience, New York
  City, NY, USA}, 2017.

\bibitem{Rafati2019phd}
J.~Rafati~Heravi.
\newblock {\em Learning Representations in Reinforcement Learning}.
\newblock PhD thesis, University of California, Merced, 2019.

\bibitem{Rafati-Noelle:2018:HRL-arXiv}
J.~Rafati and D.C. Noelle.
\newblock Learning representations in model-free hierarchical reinforcement
  learning.
\newblock {\em arXiv e-print (arXiv:1810.10096)}, 2019.

\bibitem{Melo:2008:Analysis-Q-learnings}
F.~S. Melo, S.~P. Meyn, and M.~I. Ribeiro.
\newblock An analysis of reinforcement learning with function approximation.
\newblock In {\em Proceedings of the 25th International Conference on Machine
  Learning}, 2008.

\bibitem{TesauroG:1995:TDGammon}
G.~Tesauro.
\newblock Temporal difference learning and {TD-Gammon}.
\newblock {\em Communications of the ACM}, 38(3), 1995.

\bibitem{DeepMind:Atari:2013}
V.~Mnih, K.~Kavukcuoglu, D.~Silver, A.~Graves, I.~Antonoglou, D.~Wierstra, and
  M.~A Riedmiller.
\newblock Playing atari with deep reinforcement learning.
\newblock {\em ArXiv e-prints (arxiv:1312.5602)}, 2013.

\bibitem{DeepMind:Nature:2015}
V.~Mnih, K.~Kavukcuoglu, D.~Silver, and Others.
\newblock Human-level control through deep reinforcement learning.
\newblock {\em Nature}, 518(7540):529--533, 2015.

\bibitem{DeepMind-AlphaGo}
D.~Silver, A.~Huang, C.~J. Maddison, others, and D.~Hassabis.
\newblock Mastering the game of go with deep neural networks and tree search.
\newblock {\em Nature}, 529(7587):484--489, 2016.

\bibitem{Wolfe1969}
P.~Wolfe.
\newblock Convergence conditions for ascent methods.
\newblock {\em SIAM Review}, 11(2):226--235, 1969.

\bibitem{gay1981computing}
D.~M. Gay.
\newblock Computing optimal locally constrained steps.
\newblock {\em SIAM Journal on Scientific and Statistical Computing},
  2(2):186--197, 1981.

\bibitem{more1983computing}
J.~J. Mor{\'{e}} and D.~C. Sorensen.
\newblock Computing a trust region step.
\newblock {\em SIAM Journal on Scientific and Statistical Computing},
  4(3):553--572, 1983.

\bibitem{ConGT00a}
A.~R. Conn, N.~I.~M. Gould, and P.~L. Toint.
\newblock {\em Trust-Region Methods}.
\newblock Society for {I}ndustrial and {A}pplied {M}athematics, Philadelphia,
  PA, 2000.

\bibitem{liu1989limited}
D.~C. Liu and J.~Nocedal.
\newblock On the limited memory {BFGS} method for large scale optimization.
\newblock {\em Mathematical programming}, 45(1-3):503--528, 1989.

\bibitem{Rafati-Marcia:2018:ICMLA}
J.~Rafati and R.~F. Marcia.
\newblock Improving {L-BFGS} initialization for trust-region methods in deep
  learning.
\newblock In {\em 17th IEEE International Conference on Machine Learning and
  Applications, Orlando, Florida}, 2018.

\bibitem{ByrNS94}
R.~H. Byrd, J.~Nocedal, and R.~B. Schnabel.
\newblock Representations of quasi-{Newton} matrices and their use in
  limited-memory methods.
\newblock {\em Math. Program.}, 63:129--156, 1994.

\bibitem{LeCun-etal-1988:LeNet-5-original}
Y.~Lecun, L.~Bottou, Y.~Bengio, and P.~Haffner.
\newblock {Gradient-based learning applied to document recognition}.
\newblock {\em Proceedings of the IEEE}, 86(11):2278--2324, 1998.

\bibitem{lecun2015lenet}
Y.~LeCun and Others.
\newblock Lenet5, convolutional neural networks.
\newblock page~20, 2015.

\bibitem{lecun1998mnist}
Yann LeCun.
\newblock The {MNIST} database of handwritten digits.
\newblock {\em http://yann. lecun. com/exdb/mnist/}, 1998.

\bibitem{RL-Book:Sutton:Barto:1998}
R.~S. Sutton and A.~G. Barto.
\newblock {\em Reinforcement Learning: An Introduction}.
\newblock MIT Press, Cambridge, MA, 1st edition, 1998.

\bibitem{Berahas2016multibatch}
A.~S. Berahas, J.~Nocedal, and M.~Takac.
\newblock A multi-batch {L-BFGS} method for machine learning.
\newblock In {\em Advances in Neural Information Processing Systems 29}, pages
  1055--1063. 2016.

\bibitem{byrd2016stochastic}
R.~H. Byrd, S.~L. Hansen, J.~Nocedal, and Y.~Singer.
\newblock A stochastic quasi-newton method for large-scale optimization.
\newblock {\em SIAM Journal on Optimization}, 26(2):1008--1031, 2016.

\bibitem{Nesterov:2013}
Y.~Nesterov.
\newblock {\em Introductory Lectures on Convex Optimization: A Basic Course}.
\newblock Springer Science {\&} Business Media, 2013.

\bibitem{Jaakola:1994:q-learning-convergence}
T.~Jaakkola, M.~I. Jordan, and S.~P. Singh.
\newblock On the convergence of stochastic iterative dynamic programming
  algorithms.
\newblock {\em Neural Computation}, 6(6):1185--1201, 1994.

\bibitem{OpenAI}
G.~Brockman, V.~Cheung, L.~Pettersson, J.~Schneider, J.~Schulman, J.~Tang, and
  W.~Zaremba.
\newblock {OpenAI Gym}, 2016.

\bibitem{Bellemare:2013:ALE}
M.~G. Bellemare, Y.~Naddaf, J.~Veness, and H.~M. Bowling.
\newblock The arcade learning environment: An evaluation platform for general
  agents.
\newblock {\em Journal of Artificial Intelligence Research}, 47:253--279, 2013.

\bibitem{Bellemare:2012:Contingency}
M.~G. Bellemare, J.~Veness, and M.~H. Bowling.
\newblock Investigating contingency awareness using atari 2600 games.
\newblock In {\em Twenty-Sixth AAAI Conference on Artificial Intelligence},
  2012.

\bibitem{Hausknecht:2014:HNeat-ATARI}
M.~Hausknecht, J.~Lehman, R.~Miikkulainen, and P.~Stone.
\newblock A neuroevolution approach to general atari game playing.
\newblock {\em IEEE Transactions on Computational Intelligence and AI in
  Games}, 6(4):355--366, 2014.

\bibitem{Schulman:2015:TRPO-ATARI}
J.~Schulman, S.~Levine, P.~Moritz, M.~Jordan, and P.~Abbeel.
\newblock Trust region policy optimization.
\newblock In {\em Proceedings of the 32nd International Conference on
  International Conference on Machine Learning}, 2015.

\end{thebibliography}
\bibliographystyle{unsrt}

\end{document}